\documentclass[10pt,letter]{article}
\usepackage[utf8x]{inputenc}
\usepackage[T1]{fontenc}
\usepackage{mathpazo} 
\usepackage{appendix}
\usepackage{booktabs}

\usepackage[pdftex]{graphicx} 
 
\usepackage{geometry}
\usepackage{macros}
\usepackage[acronym,nonumberlist,nopostdot]{glossaries}
\glsdisablehyper
\newacronym{gp}{GP}{Gaussian processe}
\newacronym{dnn}{DNN}{Deep Neural Network}
\newacronym{dgp}{DGP}{Deep Gaussian Process}
\newacronym{qoi}{QoI}{quantity of interest}
\newacronym{mse}{MSE}{mean square error}
\newacronym{ard}{ARD}{Automatic Relevancy Determination}
\newacronym{rmse}{RMSE}{root mean square error}
\newacronym{svgp}{SVGP}{Sparse Variational Gaussian Process}
\newacronym{koh}{KOH}{Kennedy \& O'Hagan}
\newacronym{nargp}{NARGP}{Nonlinear Autoregressive Gaussian Process}
\usepackage[backend=bibtex,style=ieee,natbib=true]{biblatex}
\title{Multifidelity-Augmented Gaussian Process Inputs \\ for Surrogate Modeling from Scarce Data}

\usepackage{authblk}

\author[1]{Atticus Rex}
\author[1,2]{Elizabeth Qian}
\author[3]{David Peterson}

\affil[1]{\small School of Aerospace Engineering, Georgia Institute of Technology, Atlanta, GA}
\affil[2]{\small School of Computational Science \& Engineering, Georgia Institute of Technology, Atlanta, GA}
\affil[3]{\small Air Force Research Lab, Wright-Patterson AFB, Dayton, Ohio}

\begin{document} 
\date{}
\maketitle

\begin{abstract} 
\noindent Supervised machine learning describes the practice of fitting a parameterized model to labeled input-output data. Supervised machine learning methods have demonstrated promise in learning efficient surrogate models that can (partially) replace expensive high-fidelity models, making many-query analyses, such as optimization, uncertainty quantification, and inference, tractable. However, when training data must be obtained through the evaluation of an expensive model or experiment, the amount of training data that can be obtained is often limited, which can make learned surrogate models unreliable. In many engineering and scientific settings, cheaper low-fidelity models may be available, for example arising from simplified physics modeling or coarse grids. These models may be used to generate additional low-fidelity training data. The goal of multifidelity machine learning is to use both high- and low-fidelity training data to learn a surrogate model which is cheaper to evaluate than the high-fidelity model, but more accurate than any available low-fidelity model. This work proposes a new multifidelity training approach for Gaussian process regression which uses low-fidelity data to define additional features that augment the input space of the learned model. Similarly to cokriging estimators, the proposed approach conditions the high-fidelity surrogate model on the predictions of all available low-fidelity surrogate models, while benefiting from the computational efficiency of autoregressive estimators. Numerical experiments on several test problems demonstrate both increased predictive accuracy and reduced computational cost relative to the state of the art.
\end{abstract}

\section{Introduction} \label{introduction}

Many scientific and engineering decisions depend on repeated physical experiments or evaluations of a computational model. This is known as \emph{many-query} analysis, which arises in optimization \cite{koziel_multi-level_2013}, uncertainty quantification \cite{zhang_multi-fidelity_2024}, and inverse problems \cite{cui_datadriven_2015, qian_model_2022}. In many practical settings, evaluating such experiments or computational models is expensive, making many-query analysis intractable. Obtaining cheap, accurate \emph{surrogate models} is often required to enable many-query analysis in complex applications. This work focuses on learning surrogate models from labeled input-output data via \emph{supervised machine learning} \cite{wang_recent_2022, kennedy_bayesian_2001, sacks_design_1989}. We consider an expensive experiment or simulation to be a function which maps known system inputs to some unknown \gls{qoi}. This function is referred to as the \emph{high-fidelity} model and is considered to be the most accurate representation of the true system available. When high-fidelity data are plentiful, standard supervised learning algorithms can train reliable surrogate models \cite{wang_experimental_2022}. However when high-fidelity data are scarce, the trained surrogate model can suffer from overfitting, poor generalization, inability to capture complex nonlinear behavior, and high model uncertainty \cite{vapnik_overview_1999, hastie_elements_2009, rasmussen_gaussian_2008}. 

To combat this data scarcity problem, cheaper approximations of the high-fidelity model, known as \emph{low-fidelity} models, may be used to generate more training data. Low-fidelity models may arise from simplifying assumptions (e.g., switching from a 3-D spatial model to a 2-D or 1-D model, enforcing steady-state conditions, or linearizing about an equilibrium), or the use of lower resolution discretizations. The goal of \emph{multifidelity machine learning} is to combine high- and low-fidelity data to train a surrogate model with higher accuracy than any low-fidelity model, but lower cost than the high-fidelity model \cite{peherstorfer_survey_2018, qian_multifidelity_2025, fernandez-godino_review_2023, brevault_overview_2020, gorodetsky_mfnets_2020, howard_multifidelity_2023, howard_multifidelity_2024, lu_multifidelity_2022, heinlein_multifidelity_2024}. Beyond machine learning, multifidelity approaches have been developed for uncertainty quantification \cite{giles_multilevel_2015, gorodetsky_generalized_2020, peherstorfer_optimal_2016, gorodetsky_grouped_2024, schaden_multilevel_2020, qian_multifidelity_2018}, statistical inference \cite{popov_multifidelity_2021, peherstorfer_transport-based_2018, catanach_bayesian_2020, christen_markov_2005, cai_multi-fidelity_2022, warne_multifidelity_2022}, and optimization \cite{keil_relaxed_2024, qian_certified_2017, klein_multi-fidelity_2025, li_multi-fidelity_2024, song_general_2019, forrester_multi-fidelity_2007, zahr_multilevel_2017, amsallem_design_2015, yano_optimization-based_2012}. 

In this work, we select candidate surrogate models from Gaussian distributions of functions, known as \glspl{gp} \cite{rasmussen_gaussian_2008}. \gls{gp} \emph{regression} (also known as kriging) is a probabilistic supervised learning technique which emerged from geostatistics in the 20th century as a way to approximate nonlinear functions from data \cite{krige_statistical_1951, matheron_principles_1963, kennedy_predicting_2000, neal_priors_1996, hornik_multilayer_1989, rasmussen_gaussian_2008}. An important limitation of \gls{gp} regression is that with $N$ training data points, the cost to train an exact \gls{gp} regression model grows at a rate of $\setO(N^3)$, the result of solving a size-$N$ linear system \cite{rasmussen_gaussian_2008}. Another challenge in GP regression is choosing the hyperparameters that define the Gaussian distribution of functions from which we choose a surrogate model. This practice, known as hyperparameter optimization, is usually iterative, requiring an $\setO(N^3)$ linear solve at each iteration~\cite{rasmussen_evaluation_1997}. Recent work modifies the standard form of \glspl{gp} as defined in \cite{rasmussen_gaussian_2008} to estimate vector-valued functions \cite{myers_matrix_1982, goulard_linear_1992, alvarez_kernels_2012, bonilla_multi-task_2007}, handle high-dimensional inputs \cite{xu_standard_2025, zhou_kernel_2020, snelson_variable_2006, tripathy_gaussian_2016, wilson_deep_2015}, choose optimal hyperparameters \cite{rasmussen_evaluation_1997,berger_bayesian_1985, lalchand_sparse_2022, stein_interpolation_1999, snoek_practical_2012}, incorporate non-Gaussian priors \cite{paciorek_spatial_2006, kindap_non-gaussian_2022,damianou_deep_2013, rasmussen_evaluation_1997}, and handle large amounts of training data \cite{smola_sparse_2000, williams_using_2000, titsias_variational_2009, fine_efficient_2001, wilson_kernel_2015, rahimi_random_2007, dai_scalable_2014}. Additionally, we emphasize that GPs are fundamentally statistical prediction models which do not incorporate any underlying physics. However, physics-informed GPs which satisfy governing physical equations are considered in \cite{raissi_machine_2017, raissi_numerical_2018, pfortner_physics-informed_2024}.

GP-based multifidelity surrogate modeling approaches can typically be classified as either cokriging or autoregressive. Cokriging estimators take the form of multi-output \glspl{gp} \cite{kennedy_predicting_2000, gratiet_multi-fidelity_2013, myers_matrix_1982, goulard_linear_1992}. This approach fits a joint Gaussian distribution to all levels of fidelity simultaneously, but incurs large offline training and online prediction costs when the number of training data points is large (which is likely the case at lower fidelity levels). Additionally, cokriging methods require a unique kernel covariance function to describe each pair of fidelity levels, causing the hyperparameter count to grow at a rate of $\setO(K^2)$, where $K$ is the number of levels of fidelity. This often results in poor-conditioning of the kernel matrices in offline training and hyperparameter optimization \cite{alvarez_kernels_2012}. To reduce the number of kernel hyperparameters, many works employ linear models of coregionalization (LMCs) which use linear combinations of shared kernels to relate the levels of fidelity \cite{garland_multi-fidelity_2020, goulard_linear_1992}. While LMCs reduce computational complexity, they also limit the high-fidelity surrogate model form to linear combinations of the low-fidelity models. Encoding the inputs with \glspl{dnn} as described in \cite{raissi_deep_2016} allows for expressive kernels, but does not provide accurate uncertainty estimates in online prediction and still requires the inversion of large multi-output kernel matrices. 

As an alternative to cokriging, autoregressive estimators train separate fidelity-specific surrogate models and then combine them to form a multifidelity model. The method described by Kennedy \& O'Hagan in \cite{kennedy_predicting_2000} uses a linear mapping between fidelity levels and corrects for nonlinear discrepancies with a \gls{gp} at each level. Since its publication, many extensions to the Kennedy O'Hagan predictor have emerged, including formulations for autoregressive DNNs instead of \glspl{gp} \cite{tang_combined_2024} and the use of advanced experimental design methods \cite{zhou_generalized_2020, zhou_sequential_2017, yang_gaussian_2024}. The nonlinear autoregressive \gls{gp} (NARGP) approach described in \cite{perdikaris_nonlinear_2017} uses a general nonlinear function to map one level of fidelity to the next-highest level. This idea is built upon as a multifidelity deep Gaussian process (MF-DGP) in \cite{cutajar_deep_2019}.  The key innovation of the NARGP and MF-DGP methods is the nonlinear combination of system inputs with low-fidelity model evaluations in a single \gls{gp} kernel. These approaches empirically outperform Kennedy O'Hagan and cokriging-style estimators, especially in the presence of nonlinear relationships between levels of fidelity \cite{perdikaris_nonlinear_2017, cutajar_deep_2019}.

When many levels of fidelity are available, the autoregressive estimators invert smaller kernel matrices compared to cokriging estimators, making them computationally cheaper in offline training. However, most still require \glspl{gp} as low-fidelity surrogate models. At lower fidelity levels, there may be millions of available training data points because the underlying computer models are cheap to evaluate. This makes training \glspl{gp} in the multifidelity setting an especially expensive task. We emphasize that these existing methods are \emph{first-order} autoregressive strategies; each fidelity-specific surrogate model is conditioned only on the predictions of the next-lowest-fidelity surrogate model. Information from any lower-fidelity surrogate models must propagate through the next-lowest-fidelity surrogate model. In this work, we increase predictive accuracy by employing a \emph{higher-order} autoregressive strategy in which each surrogate model is directly conditioned on the predictions of \emph{all} available lower-fidelity surrogate models.

We propose a novel multifidelity \gls{gp}-based surrogate modeling strategy which trains one high-fidelity \gls{gp} surrogate model from system inputs augmented with features defined by the available low-fidelity data. These multifidelity-augmented features are formed recursively with the training of each low-fidelity surrogate model; for each fidelity-specific dataset, we train a surrogate model whose predictions become a new feature on which the next surrogate model is trained. In contrast to the NARGP and MF-DGP approaches~\cite{perdikaris_nonlinear_2017, cutajar_deep_2019}, where only the next-lowest model is integrated into each \gls{gp} kernel, we use predictions from \emph{all} available low-fidelity surrogate models. In the proposed method, we also relax the requirement that the low-fidelity surrogate models are \glspl{gp}, which substantially reduces offline training and online prediction cost. In this work, we consider \glspl{gp} as described in \cite{rasmussen_gaussian_2008}, but emphasize that the many modifications to this canonical formulation may be applied to the proposed method. Our contributions are: 
\begin{enumerate}
    \item We propose a novel GP-based multifidelity data-driven surrogate modeling technique which uses low-fidelity data to define features that augment the input space of the learned surrogate model. 
    \item We provide analysis of the method, characterizing its computational cost and guaranteeing the existence of a marginal likelihood at least as high as existing methods under mild assumptions.
    \item We demonstrate numerically that the method is both more accurate and cost efficient than existing single- and multifidelity machine learning methods on surrogate modeling problems drawn from chemical kinetics modeling and aerospace propulsion. 
\end{enumerate}

The remainder of this paper is organized as follows: \Cref{background} introduces the problem statement and relevant background; \Cref{new-method} presents the new method; \Cref{results} provides numerical experiments; \Cref{conclusion} concludes.

\section{Background} \label{background}
In this section we provide necessary background and context for the proposed method. \Cref{problem-statement} defines the multifidelity problem statement and mathematical notation, \Cref{gpr_summary} introduces single-fidelity Gaussian process regression (kriging), and \Cref{cokriging-estimators,autoregressive-estimators} discuss the existing cokriging and autoregressive approaches to multifidelity machine learning.

\subsection{Problem Statement and Notation} \label{problem-statement}
Let $\{y_{l}:\R^d \rightarrow \R \}_{{l}=1}^K$ denote a set of $K$ models of varying levels of fidelity, which map system inputs $\bx \in \R^d$ to a scalar quantity of interest.
At a given level ${l}$, we have a dataset $\setD_{l} = (\bX_{l}, \by_{l})$ where $\bX_{l} = \{\bx_1^{({l})}, \dots, \bx_{N_{l}}^{({l})} \}$, $\by_{l} = [y_{l}(\bx_1^{({l})}), \dots, y_{l}(\bx_{N_{l}}^{({l})})]^\top \in \R^{N_{l}}$, and $N_{l}$ is the number of training data points available. We make no noise-free assumptions about the output data nor any nesting assumptions about the input data. We emphasize that the only ordering assumption is that $y_1$ is highest-fidelity, but $y_2, \dots, y_K$ may be arbitrarily ordered with respect to fidelity. We consider the black-box setting in which we only have access to the data, not to the true models.  Our goal is to use $\setD_1, \dots, \setD_K$ to learn a predictive surrogate model $h_1:\R^d \rightarrow \R$ which accurately approximates the high-fidelity model $y_1$. 

\subsection{Single-Fidelity Gaussian Process Regression} \label{gpr_summary}

Let us consider the high-fidelity training outputs $\by_1$ as the true model $y_1$ evaluated on the inputs $\bX_1$ and corrupted by Gaussian white noise:
\begin{align*}
\by_1 \sim \setN \left( y_1(\bX_1),  \sigma_1^2 \bI \right),
\end{align*}
where $\sigma_1^2 \in \R_+$ is the variance of the noise. Our goal is to choose an $h_1$ to match the uncorrupted $y_1$ with minimal error. We now define a set of functions $\setH_\theta$ equipped with a Gaussian probability density function $p(h_1|\theta)$ where $\theta \in \Theta \subseteq \R^p$ is the set of hyperparameters. This density is the functional \emph{prior} distribution of the surrogate model $h_1$. Assuming $y_1 \in \setH_\theta$, Gaussian process regression seeks the \emph{posterior} distribution of $h_1$ given a noisy set of observations of the true model $y_1$. Both the set of hypothesis functions $\setH_\theta$ and the prior $p(h_1|\theta)$ are fully specified by a mean function, $\mu(\cdot; \theta):\R^d  \times \Theta \rightarrow \R$ and a symmetric positive definite kernel covariance function, $k(\cdot, \cdot; \theta):\R^d \times \R^d \times \Theta \rightarrow \R$. The mean function is typically chosen to capture general trends in the data (e.g., zero, constant, linear, quadratic, etc.). 

In this work, we will use Radial Basis Function (RBF) kernels, often called Gaussian or Squared Exponential kernels, a common choice for approximating smooth, nonlinear functions. Specifically, we will use an anisotropic RBF kernel known as the \gls{ard} kernel: 
\begin{align} \label{rbf-kernel}
    k(\bx, \bx'; b, \blambda) = b^2 \cdot \exp\left(-\frac{1}{2} \sum_{i=1}^d \frac{( \bx_i - \bx'_i)^2}{\lambda_i^2}  \right), 
\end{align} 
where $\blambda =\{\lambda_1, \dots, \lambda_d\}$ is a set of real scalars and $\bx_i$ denotes the $i$th entry of $\bx$. When the hyperparameters $\blambda$ are optimized, they automatically adjust to determine the relevance of each entry of $\bx$. If an entry $\bx_i$ provides useful information about $h_1(\bx)$, then $\lambda_i$ will be small. If $\bx_i$ provides no information about $h_1(\bx)$, then $\lambda_i \rightarrow \infty$. If an ARD kernel is used to define the functional prior $p(h_1|\theta)$, all functions contained in this prior are continuous and infinitely differentiable \cite{rasmussen_gaussian_2008}.

The hyperparameters $\theta$ are the union of the mean and kernel parameters. For brevity, we sometimes elect to drop the explicit dependence on $\theta$ in $k$ and $\mu$.  Let $\bK = \bK(\theta) = k (\bX_1, \bX_1; \theta) = k(\bX_1, \bX_1)$ denote the $N \times N$ kernel matrix whose entries are evaluations of the kernel function $k(\cdot, \cdot; \theta)$ at each pair of training inputs (e.g., $\bK_{ij} = k(\bx_i, \bx_j; \theta)$ for $\bx_i, \bx_j \in \bX_1$). Let $\bx'$ be an arbitrary test input at which we seek to approximate the true model $y_1(\bx')$. We assume $\by_1$ has been sampled from $\setH_\theta$ with additional Gaussian white noise of variance $\sigma_1^2$. This means $\by_1$ and $h_1(\bx')$ are sampled according to the following joint normal distribution:
\begin{align} \label{eqn:gpr_bivariate}
    \begin{bmatrix}
        \by_1 \\ h_1(\bx') 
    \end{bmatrix} \sim \mathcal{N} \left( \begin{bmatrix}
        \mu(\bX_1) \\ \mu(\bx') 
    \end{bmatrix}, \begin{bmatrix}
        k(\bX_1, \bX_1) + \sigma_1^2 \bI & k(\bX_1, \bx') \\ 
        k(\bx', \bX_1) & k(\bx', \bx') 
    \end{bmatrix} \right). 
\end{align}
If $h_1(\bx')$ varies according to \cref{eqn:gpr_bivariate}, the conditional distribution of $h_1(\bx')$ given a training dataset, $\setD_1 = (\bX_1, \by_1) $, is also Gaussian:   
\begin{subequations} \label{eqn:gpr-posterior}
    \begin{align}
        p(h_1(\bx')|\by_1) = \mathcal{N}&\left(\E{h_1(\bx') | \by_1},  \Var{h_1(\bx') | \by_1}\right), \\ 
        \E{h_1(\bx') | \by_1} &= k(\bx', \bX_1) \left( k(\bX_1, \bX_1) + \sigma_1^2 \bI \right)^{-1} \left( \by_1 - \mu(\bX_1) \right) + \mu(\bx'),  \label{analytical-mean} \\ 
        \Var{h_1(\bx') | \by_1} &= k(\bx', \bx') - k(\bx', \bX_1) \left( k(\bX_1, \bX_1) + \sigma_1^2 \bI \right)^{-1} k(\bX_1, \bx').  \label{analytical-variance}
    \end{align}
\end{subequations}
\Cref{eqn:gpr-posterior} describes the Bayesian posterior, $p(h_1|\by_1)$ of the surrogate model $h_1$. The \emph{marginal likelihood} (also called the model evidence) of the training outputs $\by_1$ given the hyperparameters of the model is 
\begin{subequations}
\begin{align*}
    p(\by_1|\theta) &= \int_{\setH_\theta} p(\by_1 | h_1, \theta) p(h_1| \theta) d h_1 \\ 
    &= \frac{1}{\sqrt{(2 \pi)^{N_1} |\bK(\theta) + \sigma_1^2 \bI|}} \exp \left( -\frac{1}{2} (\by_1 - \mu(\bX_1; \theta))^\top (\bK(\theta) + \sigma_1^2 \bI)^{-1} (\by_1 - \mu(\bX_1; \theta )) \right),
\end{align*}
\end{subequations}
where $|\cdot|$ denotes the matrix determinant. Optimal mean and kernel hyperparameters, as well as white noise variance, can be obtained by  maximizing the marginal likelihood (a practice known as Empirical Bayes or Type-II maximum likelihood estimation): 
\begin{subequations}\label{eqn:gpr-objective-func}
\begin{align} 
    \argmax{\theta, \sigma_1}\hspace{4pt} \log p(\by_1 | \theta, \sigma_1) &=  \argmin{\theta, \sigma_1} \hspace{6pt} \frac{1}{2} \left[ \Tilde{\by}_1(\theta) ^\top (\bK(\theta) + \sigma_1^2 \bI)^{-1} \Tilde{\by}_1(\theta) +\log|\bK(\theta) + \sigma_1^2 \bI| + N_1 \log (2 \pi) \right]\\ &= \argmin{\theta, \sigma_1} \hspace{6pt} \underbrace{\hspace{3pt} \Tilde{\by}_1(\theta) ^\top (\bK(\theta) + \sigma_1^2 \bI)^{-1} \Tilde{\by}_1(\theta)}_{\text{data fit incentive}} + \underbrace{\log|\bK(\theta) + \sigma_1^2 \bI|}_{\text{model complexity penalty}}, \label{eqn:gpr-minimization}
\end{align}
\end{subequations}
where $\Tilde{\by}_1(\theta) =\by_1 - \mu(\bX_1; \theta)$. \Cref{eqn:gpr-minimization} is comprised of a term which incentivizes how well $\mu(\bX_1; \theta)$ matches $\by_1$ on average and a term which penalizes the complexity of the functions contained within $\setH_\theta$. Optimizing \cref{eqn:gpr-objective-func} is usually solved through iterative constrained gradient-based optimization, and can be numerically challenging as a result of convergence to local extrema and poorly conditioned kernel matrices \cite{rasmussen_gaussian_2008}. 

Because \glspl{gp} are used throughout this paper as a solution to supervised machine learning problems, we will often use the following abbreviation: 
\begin{align*}
    \setG \setP (\bX, \by, \mu, k), 
\end{align*}
where $\bX$ is the set of training inputs, $\by$ is the vector of training outputs, $\mu$ is the mean function, and $k$ is the kernel covariance function. 

\subsection{Cokriging Estimators} \label{cokriging-estimators}

Cokriging is a multifidelity machine learning method which relates the levels of fidelity using a jointly distributed (multi-output) \gls{gp} \cite{myers_matrix_1982, xiao_extended_2018, goulard_linear_1992}:
\begin{align} \label{eqn:cokriging-prior}
    \begin{bmatrix}
        \by_1 \\ \vdots \\ \by_K
    \end{bmatrix} \sim \setN \left( \begin{bmatrix}
        \mu_1(\bX_1; \theta_{1}) \\ \vdots \\ \mu_K(\bX_K; \theta_K)
    \end{bmatrix}, \begin{bmatrix}
        k_{11}(\bX_1, \bX_1; \theta_{11}) + \sigma_1^2 \bI & \dots & k_{1K}(\bX_1, \bX_K; \theta_{1K})   \\ 
        \vdots & \ddots & \vdots  \\ 
        k_{K1}(\bX_K, \bX_1; \theta_{K1}) & \dots & k_{KK}(\bX_K, \bX_K; \theta_{KK}) + \sigma_K^2  \\ 
    \end{bmatrix}\right)
\end{align}
where $\sigma_{l}^2$ is the variance of Gaussian white noise at level ${l}$, $\mu_{l}$ is the mean function at level ${l}$, and $k_{ij}$ is the kernel function which relates level $i$ to level $j$. The function-space interpretation of cokriging not only provides us with the probabilistic uncertainty estimation of \glspl{gp}, but also a principled approach for hyperparameter optimization. If we denote $\boldsymbol{\theta}$ as the set of hyperparameters across all kernels and mean functions, and $\bs = \{\sigma_1^2, \dots, \sigma_K^2\}$ as the set of noise variances used in \cref{eqn:cokriging-prior}, we can maximize the the model evidence as we do in \cref{eqn:gpr-objective-func}: 
\begin{subequations}
    \begin{align*}
        \boldsymbol{\theta}^*, \bs^* &= \argmax{\boldsymbol{\theta}, \bs} \quad \log p(\by_1, \dots, \by_K | \boldsymbol{\theta}, \bs) \\ 
        &=  \argmin{\boldsymbol{\theta}, \bs} \quad \Tilde{\bY}^\top (\Tilde{\bK}(\btheta) + \Tilde{\bSigma})^{-1} \Tilde{\bY} + \log(|\Tilde{\bK}(\btheta) + \Tilde{\bSigma}|), 
    \end{align*}
\end{subequations}
where the following abbreviations have been made: 
\begin{align*}
    \Tilde{\bY} = \begin{bmatrix}
        \by_1 - \mu_1(\bX_1; \theta_1) \\ \vdots \\ \by_K - \mu_K(\bX_K; \theta_K)  
    \end{bmatrix}, &\quad \Tilde{\bK}(\btheta) = \begin{bmatrix}
        \bK_{11}(\theta_{11})  & \dots & \bK_{1K}(\theta_{1K})   \\ 
        \vdots & \ddots & \vdots  \\ 
        \bK_{K1}(\theta_{K1}) & \dots & \bK_{KK}(\theta_{KK})  \\ 
    \end{bmatrix}, \\ 
    \text{and} \quad \Tilde{\bSigma} &= \text{block-diag}(\sigma_1^2 \bI, \dots, \sigma_K^2 \bI ). 
\end{align*}
The predictive posterior distribution for high-fidelity model evaluations at an unseen test input $\bx'$ is Gaussian, defined by 
\begin{subequations}
\begin{align*}
    \E{h_1(\bx')|\by_{1:K}} &= \Tilde{\bK}' \left( \Tilde{\bK}(\btheta) + \Tilde{\bSigma} \right)^{-1} \Tilde{\bY} + \mu_1(\bx'; \theta_1) \quad \text{and} 
    \\ 
    \V{h_1(\bx')|\by_{1:K}} &= k_{11}(\bx', \bx') - \Tilde{\bK}' \left( \Tilde{\bK}(\btheta) + \Tilde{\bSigma} \right)^{-1} \Tilde{\bK}'^\top, 
\end{align*}
\end{subequations}
where $\Tilde{\bK}' = \begin{bmatrix}
    k_{11} (\bx', \bX_1) & \dots & k_{1K}(\bx', \bX_K)  
\end{bmatrix}$. The cokriging estimator is sensitive to the selection of the kernel hyperparameters $\btheta$ and prone to numerical instability from poorly conditioned kernel matrices. For this reason, LMCs are often used to express kernel $k_{ij}$ as a linear combination of shared kernels across all levels of fidelity: 
\begin{align*}
    k_{ij}(\bx, \bx'; \btheta) = \sum_{r=1}^R b_{ij}^{(r)} k(\bx, \bx'; \theta_r). 
\end{align*}
This kernel formulation prevents the number of kernels (and therefore the number of hyperparameters) from scaling quadratically with the number of levels of fidelity \cite{garland_multi-fidelity_2020, goulard_linear_1992}. This simplification limits the model form to linear mappings between levels of fidelity, thereby preventing nonlinear combinations of low-fidelity model evaluations from informing high-fidelity predictions. Physics-informed cokriging models have been considered for two-level cases in \cite{yang_physics-informed_2019, yang_when_2020}. 

\subsection{Autoregressive Estimators} \label{autoregressive-estimators}

Autoregressive estimators approximate the levels of fidelity sequentially, starting at level $K$ (lowest-fidelity), and apply some mapping from level ${l}+1$ to level ${l}$ until the highest fidelity is reached. In contrast to cokriging models, each individual surrogate model $h_{l}:\R^d \rightarrow \R$ is trained separately instead of jointly. An example of such an approach is that proposed by \gls{koh} in \cite{kennedy_predicting_2000}. The \gls{koh} method adapts surrogate model $h_{{l}+1}$ to match the next-highest level $h_{{l}}$: 
\begin{align} \label{eqn:koh_model}
    h_{{l}}(\bx) = {\rho_l} \cdot h_{{l}+1}  (\bx) + {\delta}_{l} (\bx), 
\end{align}
\noindent where $\rho_l \in \R$ is a constant scaling $h_{{l}+1}$ to $h_{{l}}$. The function $\delta:\R^d \rightarrow \R$ accounts for nonlinear variation not captured by $\rho_l$. In \cite{kennedy_predicting_2000}, $\delta(\bx)$ is a \gls{gp}: 
\begin{align*}
    \delta_{l} = \mathcal{GP}\left(  \bX_{l}, \left[ \by_{{l}}-\rho_l \cdot y_{{l}+1}(\bX_{l}) \right], \mu_l = 0, k_{{l}} \right), 
\end{align*}
\noindent where $\mu_{l}$ and $k_{{l}}$ are the level-specific mean and kernel functions. The parameters $\rho_l$ and $\theta$ at each level are chosen via maximum marginal likelihood estimation similarly to that described in \cref{eqn:gpr-objective-func}. Sometimes, authors elect to parameterize $\rho_l$ as a function $\rho_l(\bx)$ \cite{gratiet_multi-fidelity_2013}. Generally, no constraints are placed on the coefficients $\rho_1, \dots, \rho_{K-1}$, however, when many levels of fidelity are present, keeping $\rho_l \approx 1$ prevents significant exponential growth or decay of posterior means and variances \cite{gratiet_multi-fidelity_2013}. The estimator described in \cref{eqn:koh_model} can also be formulated as a multi-output GP where $\delta_{l}(\bx) \sim \setN(0, k_{{l}})$ and $h_{{l}+1} \sim \setN(0, k_{{l}+1})$:
\begin{align} \label{eqn:koh-cokriging}
    \begin{bmatrix}
        h_{{l}+1}(\bx) \\ h_{{l}}(\bx) 
    \end{bmatrix} \sim \setN \left( \begin{bmatrix}
        0 \\ 0  
    \end{bmatrix}, \begin{bmatrix}
        k_{{l}+1}(\bx, \bx') & \rho_l \cdot k_{{l}+1}(\bx, \bx') \\ \rho_l \cdot k_{{l}+1}(\bx, \bx') & \rho_l^2 \cdot k_{{l}+1}(\bx, \bx') + k_{{l}}(\bx, \bx') 
    \end{bmatrix} \right).
\end{align}
This approach is also known as ``recursive cokriging'' \cite{gratiet_recursive_2014}. 

In contrast, the \gls{nargp} approach proposed in \cite{perdikaris_nonlinear_2017} combines the $\rho_l \cdot h_{{l}+1}(\bx)$ term into a single nonlinear transformation of $\bx$ and $h_{{l}+1}(\bx)$:
\begin{align*}
    h_{{l}} = \setG \setP \left( \begin{bmatrix}
        \bX_{{l}} & h_{{l}+1}(\bX_{{l}})
    \end{bmatrix}, \by_{{l}}, \mu=0, k_{l} \right).
\end{align*}
The consolidation of $\rho_l$, $h_{{l}+1}(\bx)$ and $\delta(\bx)$ results in the following multifidelity kernel: 
\begin{align*}
k_{l}(\bx, \bx') = k(\bx, \bx'; \theta_{p}) \cdot k(h_{{l}+1}(\bx), h_{{l}+1}(\bx'); \theta_{h}) + k(\bx, \bx'; \theta_{\delta}).
\end{align*}
Each of these kernels is parameterized by unique hyperparameters $\theta_{ p}$, $\theta_{h}$, and $\theta_{\delta}$ at each level of fidelity. Because the kernel takes evaluations of $h_{l+1}$ as inputs, these inputs are uncertain. However, under the assumption that the levels of fidelity are noiseless and the training data are nested such that $\bX_{l} \subset \bX_{{l}+1}$ (see \cite{perdikaris_nonlinear_2017, gratiet_recursive_2014}), the estimator can be trained at each fidelity level as a single-fidelity GP by maximum marginal likelihood estimation. Some work has considered autoregressive GP priors which incorporate underlying physics \cite{spitieris_bayesian_2023, yang_physics-informed_2019, yang_when_2020}. 

\section{Multifidelity-Augmented Gaussian Process Inputs} \label{new-method}

This section presents the proposed method. \Cref{method-overview} provides a step-by-step description of the method and justifies its theoretical utility, and \cref{analysis} analyzes the computational cost. 

\subsection{Method Overview} \label{method-overview}

The proposed method trains $K$ surrogate models to emulate each level of fidelity, starting from level $K$ (lowest-fidelity) and ending at level 1 (highest-fidelity). At each subsequent level ${l}$, the outputs from trained surrogate models $h_{{l}+1}$ through $h_K$ are used as features to provide additional information about the true model $y_{l}$. For pseudocode of the proposed method, refer to \Cref{algo:offline-train} for offline training and \Cref{algo:online-predict} for online prediction. Except for the highest level of fidelity, these surrogate models may be any regression method that provides a point estimate for the unknown function (e.g., linear regression, Deep Neural Networks, K-Nearest Neighbors, Random Forests, etc.). 

\begin{algorithm}[htb!] 
\caption{Offline Training}
\label{algo:offline-train}
\begin{algorithmic}[1] 
\State \textbf{Inputs:} multifidelity training data, $\{ \setD_{l} = (\bX_{l}, \by_{l})\}_{{l} = 1}^{K}$, kernel function $k$, mean function $\mu$
\State \textbf{Outputs:} trained surrogate models $h_1$ through $h_K$ for each level of fidelity \vspace{12pt}

\State Initialize the high-fidelity feature matrix $\bPhi_1 = \bX_1$ where $\bX_1 \in \R^{N_{l} \times d}$. 
\vspace{12pt}

\For{${l} = K, \dots, 2$ }
    \State Initialize $\bPhi_{l} = \bX_{l}$ 
    \For{$j = K, \dots, {l}+1$}
        \State Update feature matrix with predictions using model $j$: $\bPhi_{l} = [\bPhi_{l} \hspace{12pt} h_j(\bPhi_{l})]$
    \EndFor 
    \State Train regression model $h_{l}$ on training data $(\bPhi_{l}, \by_{l}) $
    \State Update high-fidelity features using model $l$: $\bPhi_1 = [\bPhi_1 \hspace{12pt} h_{l}(\bPhi_1)] $
\EndFor \vspace{12pt}

\State Train a GP regression model $h_1$ on $\bPhi_1$ and $\by_1$ with kernel $k$ and mean $\mu$ 
\vspace{12pt}

\State \textbf{return}  trained surrogate models $h_1, \dots, h_K$
\end{algorithmic}
\end{algorithm}

\begin{algorithm}[htb!] 
\caption{Online Prediction} 
\label{algo:online-predict}
\begin{algorithmic}[1] 
\State \textbf{Inputs:} set of testing inputs $\bX'$, trained regression models $h_1,\dots, h_K$.  
\State \textbf{Outputs:} posterior mean and variance of high-fidelity model predictions at $\bX'$ \vspace{12pt}

\State Initialize the testing features $\bPhi' = \bX'$
\vspace{12pt}

\For{${l} = K, \dots, 2$ }
    \State Update features : $\bPhi' = [\bPhi' \hspace{12pt} h_{l}(\bPhi')] $
\EndFor \vspace{12pt}
\State \textbf{return} posterior $\bmu, \bSigma$ from trained high-fidelity \gls{gp} model $h_1(\bPhi')$ (see \cref{eq:magpi-posterior}). 
\end{algorithmic}
\end{algorithm}

\newpage 

Let $\bPhi_K \in \R^{N_{K} \times d}$ be a data matrix such that each row is the transpose of a training input in $\bX_K$. To begin the offline training process, regression model $h_K:\R^d \rightarrow \R$ is trained on dataset $\setD_K = (\bPhi_K, \by_K)$.  Next, regression model $h_{K-1}:\R^{d+1} \rightarrow \R$ is trained using the feature matrix $\bPhi_{K-1}\in \R^{N_{K-1} \times (d + 1)}$ and outputs $\by_{K-1} \in \R^{N_{K-1}}$. The feature matrix $\bPhi_{K-1} $ is formed by horizontally concatenating inputs $\bX_{K-1}$ and evaluations of the trained model $h_K$ on $\bX_{K-1}$: 
\begin{align*}
    \bPhi_{K-1} = \begin{bmatrix}
        \phi_{K-1}(\bx_1)^\top \\ \vdots \\ \phi_{K-1}(\bx_{N_{K-1}})^\top 
    \end{bmatrix} = \begin{bmatrix}
        \bX_{K-1} & h_K(\bX_{K-1}) 
    \end{bmatrix} = \begin{bmatrix}
        \bx_1^\top & h_K(\bx_1) \\ 
        \vdots & \vdots \\ 
        \bx_{N_{K-1}}^\top & h_K(\bx_{N_{K-1}}) 
    \end{bmatrix}. 
\end{align*}
Each subsequent regression model $h_{l}$ is trained on a unique feature matrix $\bPhi_{l} \in \R^{N_{l} \times (d + K - {l})}$ and outputs $\by_{l} \in \R^{N_{l}}$. The rows of $\bPhi_{l}$ are evaluations of the features $\phi_{l}:\R^d \rightarrow R^{d+K-{l}}$ on each input in $\bX_{l}$. The features $\phi_{l}$ are evaluated recursively for any input $\bx$: 
\begin{align} \label{eq:recursive-features}
    \phi_{l}(\bx) = \begin{bmatrix}
        \phi_{{l}+1}(\bx) \\ h_{{l}+1}(\bx) 
    \end{bmatrix}, \quad \text{for} \quad{l} = 1, \dots, K-1 \quad \text{and} \quad \phi_K(\bx) = \bx \quad \text{(base case)}. 
\end{align}

After surrogate models $h_2, \dots, h_K$ are trained, the high-fidelity training features $\bPhi_1 \in \R^{N_1 \times (d+K-1)}$ combine the inputs $\bX_1$ and evaluations of \emph{all} trained low-fidelity surrogate models $h_2, \dots, h_K$ on each input $\bx_i \in \bX_1$: 
\begin{align*}
    \bPhi_1 = \begin{bmatrix} \phi_1(\bx_1)^\top \\ \vdots \\ \phi_1(\bx_{N_1})^\top \end{bmatrix} = \begin{bmatrix}
        \bx_1^\top & h_K(\bx_1) & \dots & h_2(\bx_1) \\ 
        \vdots & \vdots & \ddots & \vdots \\ 
        \bx_{N_1}^\top & h_K(\bx_{N_1}) & \dots & h_2(\bx_{N_1})
    \end{bmatrix}. 
\end{align*}
Once $\bPhi_1$ is computed, the high-fidelity \gls{gp} has its kernel hyperparameters optimized by maximizing the marginal likelihood: 
\begin{align*}
    \theta^*, \sigma_1^* = \argmin{\theta, \sigma} \hspace{6pt} \Tilde{\by}_1(\theta)^\top \left( k(\bPhi_1, \bPhi_1; \theta) + \sigma_1^2 \right)^{-1} \Tilde{\by}_1(\theta) + \log \left\vert k(\bPhi_1, \bPhi_1; \theta) + \sigma_1^2 \right\vert, 
\end{align*}
where $\Tilde{\by}_1(\theta) = \by_1 - \mu(\bX_1; \theta)$. Lastly, online predictions at unseen test inputs $\bX'$ are made by forming $\bPhi' = \phi_1(\bX')$ recursively using the low-fidelity model predictions, as in \cref{eq:recursive-features}. Once $\bPhi'$ is obtained, online predictions are made using the standard \gls{gp} posterior defined by  
\begin{subequations}
\begin{align} \label{eq:magpi-posterior}
    \E{h_1(\bX')|\by_1} &= k(\bPhi', \bPhi_1) \left( k(\bPhi_1, \bPhi_1) + \sigma^2 \right)^{-1} \left( \by_1 - \mu(\bPhi_1) \right) + \mu(\bPhi'), \\ 
    \V{h_1(\bX')|\by_1} &= k(\bPhi', \bPhi') - k(\bPhi', \bPhi_1) \left( k(\bPhi_1, \bPhi_1) + \sigma^2 \right)^{-1} k(\bPhi_1, \bPhi'). 
\end{align}
\end{subequations}

Because of the recursive structure of the estimator, the coefficients of each surrogate model are successively multiplied across the levels of fidelity, which may cause instability in high-fidelity predictions. However, unstable model parameters tend to produce highly inaccurate surrogate models; sequentially optimizing the model parameters at each level of fidelity mitigates the risk of such recursive instabilities and we do not observe instabilities in our numerical experiments in \Cref{results}. Additionally, because each set of features $\phi_l(\bx)$ uses a concatenation of the system inputs $\bx$ and model outputs $h_{l-1}, \dots, h_K$, the entries of $\phi_l(\bx)$ may span multiple orders of magnitude. This often results in numerically ill-conditioned parameter optimization problems. A standard technique to accelerate convergence is transforming each entry of $\phi_{l}(\bx)$ to have zero mean and unit variance over the training inputs $\bX_l$.

The proposed approach has several advantages over the existing methods discussed in \Cref{introduction}. In contrast to cokriging estimators, which solve large linear systems comprised of all high- and low-fidelity training data, we reduce cost by training surrogate models $h_K$ through $h_1$ sequentially, as the autoregressive estimators are trained. Switching to alternative low-fidelity surrogate models can further reduce cost in offline training and online prediction. Since the matrix $\bPhi_l$ contains all lower-fidelity surrogate model evaluations, the proposed method is not a first-order autoregressive strategy; rather, the order at a given level is exactly the number of available lower-fidelity surrogate models. The proposed method's integration of models $h_2$ through $h_K$ into a single GP kernel enables general nonlinear combinations of \emph{all} low-fidelity surrogate models to influence the predictions of the high-fidelity surrogate model $h_1$. Lastly, the mean function plays a subtle but crucial role in the estimator. Extrapolating outside of the training data is usually challenging for single-fidelity GP regression models; when an unseen test input is significantly far from the training inputs, the surrogate model simply outputs its functional prior \cite{rasmussen_gaussian_2008}. However, suppose $\mu:\R^{d+K-1} \times \Theta \rightarrow \R$ is a linear mean function of the form: 
\begin{align}
    \mu(\phi_1(\bx); \theta) = \sum_{i=1}^d \alpha_i \cdot x_i + \sum_{\ell=2}^K \beta_\ell \cdot h_\ell(\phi_\ell(\bx)) + \gamma, 
\end{align}
where $x_i$ is the $i$th entry of $\bx$, and the hyperparameters $\{ \alpha_i \in \R \}_{i=1}^d, \{\beta_\ell \}_{\ell=2}^K$, as well as $\gamma$ are calibrated via maximizing the log marginal likelihood. We emphasize that the mean function is itself a simple multifidelity predictive model which centers the GP's prior at a linear combination of the inputs and surrogate model evaluations. This informative prior allows the learned model to extrapolate beyond scarce high-fidelity training data into areas where low-fidelity training data is still plentiful. 

We now briefly justify the theoretical utility of adding low-fidelity features to the high-fidelity \glspl{gp}. We show that with specific classes of kernel and mean functions, adding more features as inputs guarantees the existence of hyperparameters which achieve at least as high a marginal likelihood as any trained \gls{gp} without the additional features. Let $\setK_1 = \{k_1:\R^d \times \R^d \rightarrow \R \}$ denote the set of kernel covariance functions and $\setM_1=\{\mu_1:\R^d \rightarrow \R \}$ denote the set of mean functions for $d$-dimensional inputs. Additionally, let $\setK_2=\{k_2:\R^{d+q} \times \R^{d+q} \rightarrow \R \}$ be the set of kernel covariance functions and $\setM_2 = \{\mu_2: \R^{d+q} \rightarrow \R \}$ be the set of mean functions for $(d+q)$-dimensional inputs formed by concatenating the original inputs with extra features, as in the proposed method. 

\begin{prp}
     Let $\setG_1 = \{(\mu_1, k_1) \hspace{6pt} \forall \mu_1 \in \setM_1, k_1 \in \setK_1\}$ and $\setG_2 = \{(\mu_2, k_2) \hspace{6pt} \forall \mu_2 \in \setM_2, k_2 \in \setK_2\}$. Let $p(\by|(\mu, k))$ be the marginal likelihood as defined in \cref{eqn:gpr-objective-func} for a given $\mu$ and $k$. For any set of training data: 
    \begin{align*}
        \setG_1 \subseteq \setG_2 \Rightarrow \sup_{(\mu_1, k_1) \in \setG_1} p(\by|(\mu_1, k_1))\leq \sup_{(\mu_2, k_2) \in \setG_2} p(\by|(\mu_2, k_2)). 
    \end{align*}
    \label{prp:marginal-likelihood-guarantee}
\end{prp}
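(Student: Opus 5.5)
The argument rests on the elementary fact that a supremum over a larger set is at least the supremum over a smaller set. The only subtlety is making precise what $\setG_1 \subseteq \setG_2$ means, since elements of $\setG_1$ act on $\R^d$ and elements of $\setG_2$ act on $\R^{d+q}$. I will therefore begin by fixing the identification used in the statement. A pair $(\mu_1,k_1)$ is regarded as an element of $\setG_2$ through its lift
\[
\hat\mu_1(\bx,\bz)=\mu_1(\bx), \qquad \hat k_1((\bx,\bz),(\bx',\bz'))=k_1(\bx,\bx'),
\]
for $\bx,\bx'\in\R^d$ and $\bz,\bz'\in\R^q$. In words, the lifted pair simply ignores the extra features. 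The hypothesis $\setG_1\subseteq\setG_2$ is then read as saying that every such lift lies in $\setG_2$. For the ARD kernel \cref{rbf-kernel} this is the limit $\lambda_i\to\infty$ on the added coordinates. For the linear mean it corresponds to setting $\beta_\ell=0$.

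The key step is to check that lifting preserves the marginal likelihood exactly. Take any training set $(\bX_1,\by_1)$ with augmented features $\bPhi_1=[\bX_1 \ \bZ]$. Evaluating the lifted pair on $\bPhi_1$ gives $\hat\mu_1(\bPhi_1)=\mu_1(\bX_1)$ and $\hat k_1(\bPhi_1,\bPhi_1)=k_1(\bX_1,\bX_1)$. We use the same noise variance $\sigma_1^2$. Both the data-fit term and the log-determinant term in \cref{eqn:gpr-minimization} then coincide, so
\[
p(\by_1\mid(\hat\mu_1,\hat k_1)) = p(\by_1\mid(\mu_1,k_1)).
\]

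Next I pass to suprema. For every $(\mu_1,k_1)\in\setG_1$ we have
\[
p(\by_1\mid(\mu_1,k_1)) = p(\by_1\mid(\hat\mu_1,\hat k_1)) \le \sup_{\setG_2} p(\by_1\mid(\mu_2,k_2)),
\]
because the lift is an element of $\setG_2$. Taking the supremum over $\setG_1$ on the left gives the claim. This holds even when the suprema are infinite or are not attained.

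The main obstacle is interpretive rather than technical: we must justify that the relevant kernel classes really contain the lifts. For ARD kernels restricted to finite length-scales, the lifted kernel is only a limit point and not a member of the class. In that case I would argue by continuity. The likelihood depends continuously on $k(\bPhi_1,\bPhi_1)$ whenever $\sigma_1^2>0$, since then the matrix stays positive definite. As $\lambda_i\to\infty$ the kernel matrix converges entrywise to $k_1(\bX_1,\bX_1)$. The likelihoods therefore converge, and the supremum inequality still holds.
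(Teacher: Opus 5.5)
Your proposal is correct and takes the same route as the paper, whose proof is just the observation that a supremum over a larger class of mean--kernel pairs is at least the supremum over a subclass. Your version is more careful than the paper's in three respects: you make explicit the lift that embeds pairs on $\R^d$ into pairs on $\R^{d+q}$, you check that this lift leaves the marginal likelihood on the augmented data unchanged (which the paper takes for granted), and your limiting argument for ARD length-scales handles the fact that, with finite length-scales, the lift lies only in the closure of $\setG_2$ rather than in $\setG_2$ itself, a point the paper's accompanying remark glosses over.
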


\begin{proof}
    If $\setM_1 \subseteq \setM_2$ and $\setK_1 \subseteq \setK_2$, the additional features produce a larger set of mean and kernel functions while still containing $\setM_1$ and $\setK_1$. In the worst case, no mean or kernel function in $\setM_2$ and $\setK_2$ is found that achieves a higher marginal likelihood than the mean functions and kernels in $\setM_1$ and $\setK_1$. In the best case, a mean function $(\mu_2 \in \setM_2 \setminus \setM_1)$ and/or a kernel function $ (k_2 \in \setK_2 \setminus  \setK_1)$ is found which achieves a higher marginal likelihood. 
\end{proof}

\begin{rmk}
Combinations of standard kernels (e.g., RBF, ARD, polynomial, Laplace, Matern, spectral mixture) and mean functions (e.g., zero, constant, linear, quadratic) satisfy the property $\setG_1 \subseteq \setG_2$ and therefore satisfy \Cref{prp:marginal-likelihood-guarantee}. 
\end{rmk}

\subsection{Complexity Analysis} \label{analysis}
The time-complexity for offline training of the proposed method is $\setO(N_1^3 + \sum_{{l}=2}^K \tau_{\text{train}}^{({l})})$, where $\tau_{\text{train}}^{({l})}$ is the time complexity to train the surrogate model for fidelity ${l}$. The offline training space complexity is $\setO(N_1^2 + \sum_{{l}=2}^K s_{\text{train}}^{({l})})$, where $s_{\text{train}}^{({l})}$ is the space complexity to train the surrogate model for fidelity ${l}$. The online prediction time complexity is $\setO(N_1 + \sum_{{l}=2}^K \tau_{\text{predict}}^{({l})})$ where $\tau_{\text{predict}}^{({l})}$ is the time complexity to make online predictions with the surrogate model for fidelity ${l}$. The online prediction space complexity is $\setO(N_1 + \sum_{{l}=2}^K s_{\text{predict}}^{({l})})$ where $s_{\text{predict}}^{({l})}$ is the space complexity to make online predictions with the surrogate model for fidelity ${l}$. Because high-fidelity data are scarce (perhaps fewer than ten data points), the $\setO(N_1^3)$ training cost is not prohibitively expensive. Exact time and space complexities of the proposed method depend on the model forms of the low-fidelity surrogate models. Possible low-fidelity surrogate models include sparse variational GPs, Linear Regression, Deep Neural Networks, K-Nearest Neighbors, Random Forests, and Boosting models, none of which incur the cubic scaling with training data. A comparison of algorithmic time and space complexities for various standard regression models can be found in \Cref{alternative-algo-costs}. For low-dimensional inputs, simple regression algorithms like K-Nearest Neighbors incur orders of magnitude lower training costs than \glspl{gp} on large datasets.

In contrast, using \glspl{gp} as low-fidelity surrogate models is usually a computationally infeasible task since low-fidelity data are cheaper to obtain and therefore are usually available in much larger quantities, sometimes in the millions of training examples. The algorithmic cost for the autoregressive KOH and NARGP methods if full-rank \glspl{gp} are used is $\setO(\sum_{{l}=1}^{K} N_{l}^3)$. While some existing methods use approximate \glspl{gp} to remove the cubic scaling with training data, they can still be costly to implement and train. In the MF-DGP approach, the algorithmic cost is $\setO(k N Q^2)$ where $k$ is the number of iteration steps, $N$ is the total number of training data points, and $Q$ is the number of inducing points (this time-complexity excludes the number of Monte-Carlo samples used to approximate the likelihood). For cokriging models, the offline training time-complexity is $\setO\left(\left[\sum_{{l}=1}^K N_{l} \right]^3\right)$ and offline training space complexity is $\setO\left(\left[\sum_{{l}=1}^K N_{l} \right]^2 \right)$ . 

\section{Results} \label{results}

In this section, we present numerical results demonstrating the efficacy of the proposed method. \Cref{performance-metrics} defines the performance metrics, \Cref{synthetic-test-problem} compares the proposed method compared with Kennedy O'Hagan, NARGP, and single-fidelity kriging on a synthetic test problem, \Cref{laminar-flame-speed} compares how these methods extrapolate outside of a constrained high-fidelity design space in a chemical kinetics problem, and \Cref{velocimetry} demonstrates how these methods interpolate between scarce, nonlinear high-fidelity data in an aerospace propulsion problem. 

\subsection{Performance Metrics} 
\label{performance-metrics}

For each numerical problem in \Cref{results}, we consider three performance indices. First, the root means squared error (RMSE). For point-estimate predictions, the RMSE is calculated with
\begin{align*}
    \text{RMSE} = \sqrt{\frac{1}{M}\sum_{i=1}^M \left(h_1(\bx_i) - y_1(\bx_i) \right)^2}, 
\end{align*}
where there are $M$ unseen test inputs and $h_1(\bx_i)$ is the high-fidelity surrogate model evaluated at a single test input $\bx_i$. \glspl{gp}, however, do not output point estimate predictions; they output a \emph{predictive posterior} in the form of a Gaussian distribution. The RMSE across the entire predictive posterior is denoted 
\begin{align*}
    \text{RMSE for GPs} = \sqrt{\frac{1}{M}\sum_{i=1}^M \left( \V{h_1(\bx_i)|\setD} + \left[ \E{h_1(\bx_i)|\setD} - y_1(\bx_i) \right]^2 \right)}, 
\end{align*}
where $\V{h_1(\bx_i)|\setD}$ is the predictive posterior variance and $\E{h_1(\bx_i)|\setD}$ is the predictive posterior mean at test input $\bx_i$. We note that this formulation penalizes both the model uncertainty (variance) and the accuracy of the mean function. 

The second performance metric is $R^2$, the squared Pearson correlation coefficient between the model's predictions and true function. The Pearson correlation coefficient is computed with 
\begin{align} \label{eq:pearson}
R = \frac{\sum_{i=1}^{M} (h_1(\bx_i)- \bar{h}_1) (y_1(\bx_i) - \bar{y}_1)}{\sqrt{\sum_{i=1}^M (h_1(\bx_i) - \bar{h}_1)^2} \sqrt{\sum_{i=1}^M (y_1(\bx_i) - \bar{y}_1)^2}} \approx \frac{\Cov{h_1}{y_1}}{\sqrt{\V{h_1} \V{y_1}}}, 
\end{align}
where $\bar{h}_1=M^{-1} \sum_{i=1}^K h_1(\bx_i)$ and $\bar{y}_1=M^{-1} \sum_{i=1}^K y_1(\bx_i)$. We will use the GP posterior mean to compute $R^2$. The values of $R^2$ lie between zero and one, where $R^2=1$ indicates that the learned model matches the true model perfectly. 

The last performance metric for GP-based methods is the log-marginal likelihood (log ML). This is defined in \cref{eqn:gpr-objective-func} and is interpreted as the probability of sampling the training data given the GP prior induced by a given kernel and mean function. We can compare two GP regression models by computing the ratio of their marginal likelihoods: 
\begin{align*}
    \frac{p(\setD|\text{model 1})}{p(\setD|\text{model 2})} = \exp \left[ \log p(\setD|\text{model 1}) - \log p(\setD|\text{model 2}) \right]. 
\end{align*}
This ratio indicates how much \emph{more likely} training data are to be sampled from one GP prior compared to another. 

\subsection{Analytical Test Problem} \label{synthetic-test-problem}

To clearly illustrate the utility of the proposed method compared to existing multifidelity machine learning methods, we will first use a simple 1-D problem. The functions used, number of training data points at each fidelity level, and Pearson correlation (see \cref{eq:pearson}) between high- and low-fidelity models are outlined in \Cref{tab:synthetic-experiment}. Many existing methods assume high correlation between high- and low-fidelity models (often $>95\%$). We selected these simple functions such that the high-fidelity has only a 63.8\% and 41.8\% correlation with the medium- and low-fidelity models, respectively. Additionally, the high-fidelity model is a nonlinear combination of the medium- and low-fidelity models, which prevents the autoregressive estimators from linearly propagating necessary low-fidelity information through to the high-fidelity surrogate model. In this example, all surrogate models are \glspl{gp} and each \gls{gp} is parameterized using ARD kernels (see \cref{rbf-kernel}). In all examples, the proposed method is given a linear mean function while Kennedy O'Hagan, NARGP, and Kriging models are given a constant mean function to correct for vertical bias. We emphasize that these are canonical formulations; a full comparison of all the proposed variations (e.g., more complex mean functions \cite{stein_universal_1991}, active learning schemes \cite{he_active_2024}, fully-Bayesian treatments \cite{neal_bayesian_1996}, etc.) to these estimators is outside the scope of this paper. For maximum fairness across methods, each \gls{gp} had its hyperparameters (mean/kernel parameters and white noise variance) optimized using gradient-descent with the standard ADAM algorithm, and each was allowed to iterate until convergence (over 1,000 iterations without an improvement in marginal likelihood). The linear mean function parameters were initialized by matching training inputs to outputs via ordinary least-squares regression. The kernel parameters were initialized to generic positive values derived from the variance of the output data. The noise variances were initialized to $10^{-6}$ and increased such that the condition number of the kernel matrix did not exceed $10^5$. We emphasize that trained GP models can be sensitive to hyperparameter selection and the marginal likelihood function can be highly non-convex; hence, in the following results, we cannot guarantee globally optimal hyperparameters.

\begin{table}[htb!]
    \centering
    \begin{tabular}{ccccc}
    \toprule 
         \textbf{Fidelity} & \textbf{Function} & \textbf{\# of Data Points} & $\mathbf{R}$ \\ \midrule 
         High-Fidelity & $\sin(2 \pi \bx) \exp(-\bx) $ & 10 & 1.000 \\ 
         Medium-Fidelity & $\sin(2 \pi \bx)$ & 100 & 0.638 \\ 
         Low-Fidelity & $\exp(-\bx) $ & 250 & 0.417 \\ \bottomrule 
    \end{tabular}
    \caption{Experimental details for the analytical test problem. We indicate the fidelity level, function expression, number of training data points, and Pearson correlation coefficient with high-fidelity for each fidelity level listed \cref{eq:pearson} with the high-fidelity model.}
    \label{tab:synthetic-experiment}
\end{table}

\definecolor{myorange}{HTML}{bf7a02} 
\definecolor{mypurple}{HTML}{800080}
\begin{figure}[htb!]
    \centering
    \includegraphics[width=1\linewidth]{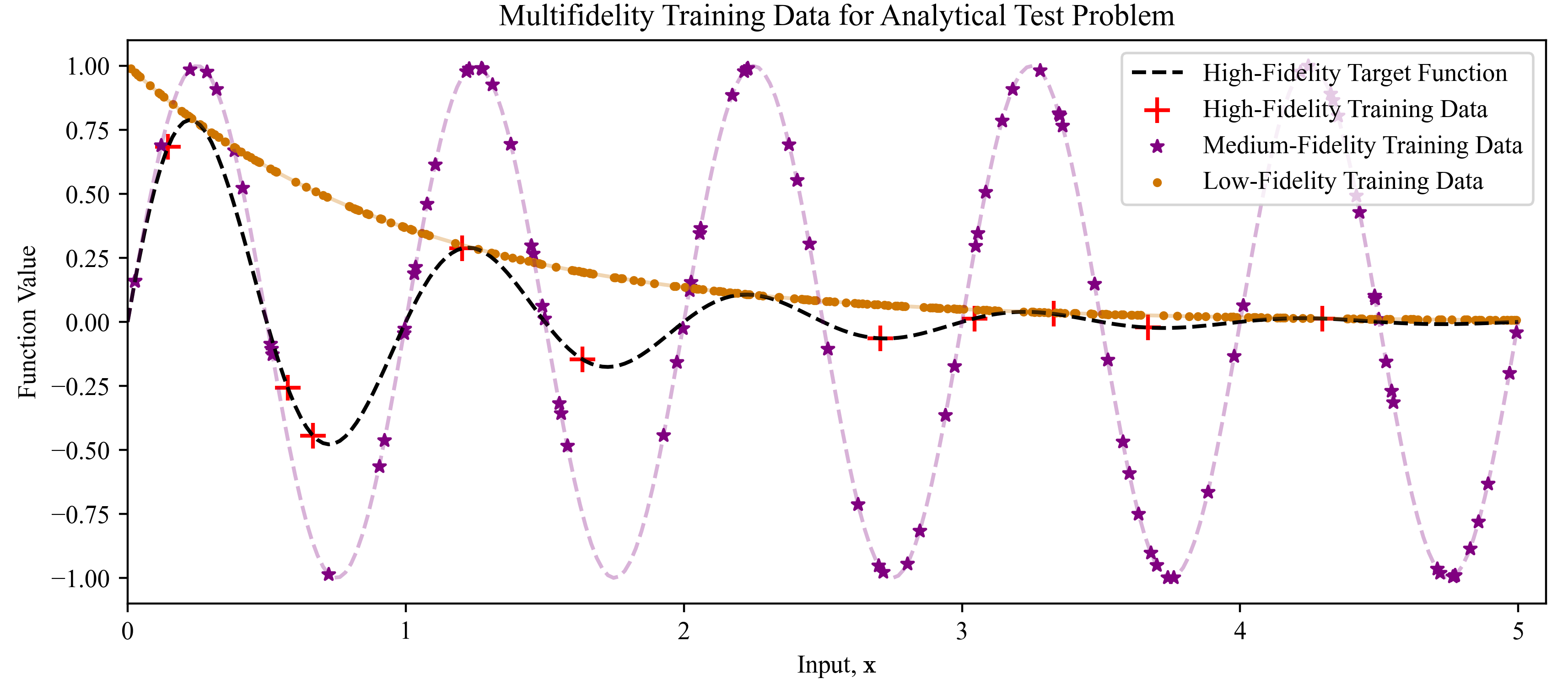}
    \caption{Training data collected by evaluating the high- (\textcolor{red}{\textbf{+}}), medium- (\textcolor{mypurple}{$\boldsymbol{\star}$}), and low-fidelity (\textcolor{myorange}{$\bullet$}) functions on uniform samples of $\bx$.}
    \label{fig:synthetic-data}
\end{figure}

\begin{figure}[htb!]
    \centering
    \includegraphics[width=1\linewidth]{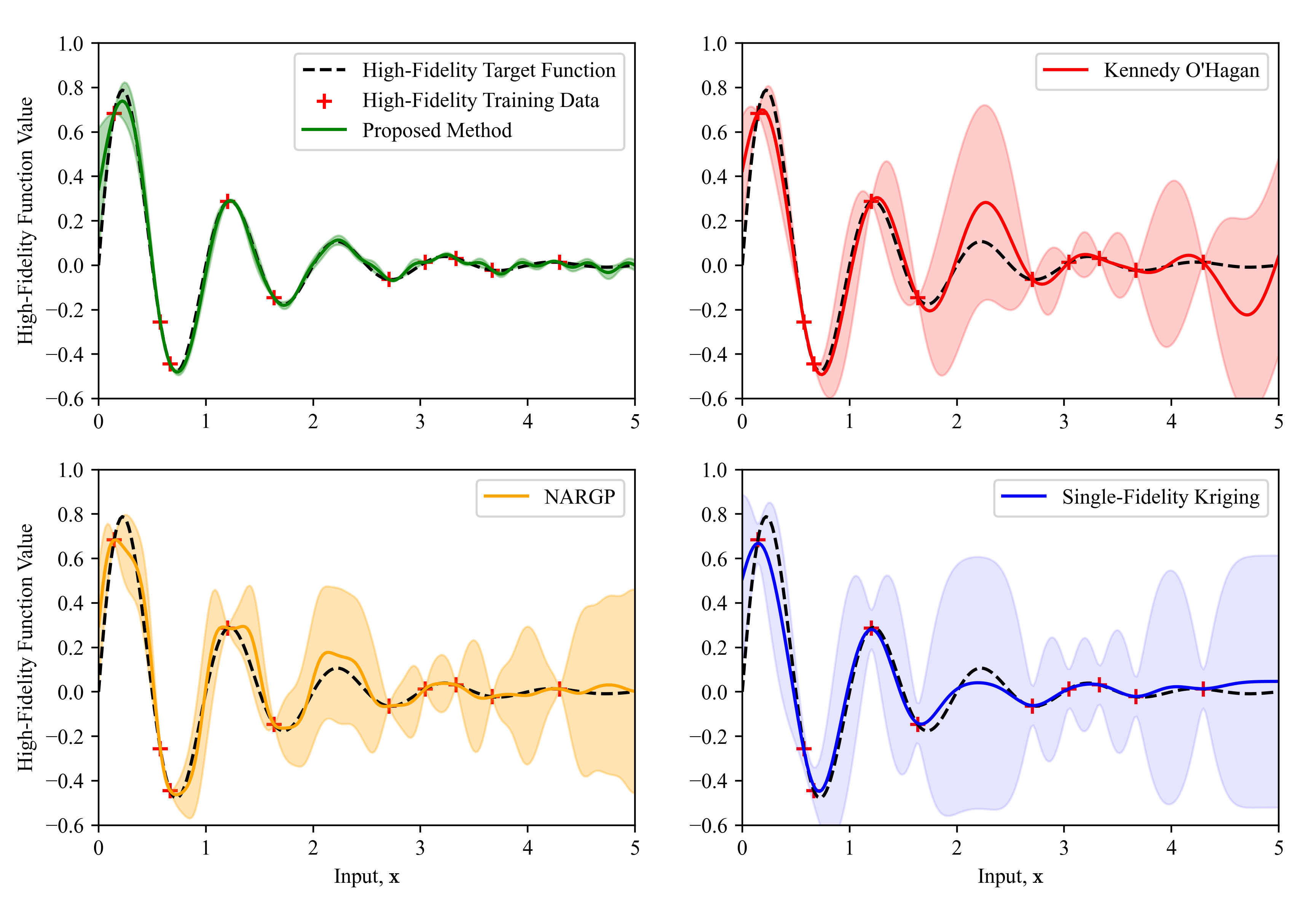}
    \caption{Results from trained models on the analytical test problem. The target functions are plotted with a black dashed line. (top left) The proposed model predictions, (top right) KOH model predictions, (bottom left) NARGP model predictions, (bottom right) single-fidelity kriging model predictions. The shaded regions represent $\pm 2 \sigma$ confidence intervals derived from the Gaussian posteriors of each estimator.}
    \label{fig:synthetic-results}
\end{figure}
\Cref{tab:synthetic-results} presents the performance metrics for the four methods tested in this analytical example. \Cref{fig:synthetic-data} plots the training data collected for each level of fidelity and \Cref{fig:synthetic-results} plots the test predictions from each model, together with the target function and training data. The shaded regions represent two standard deviations above and below ($\pm 2 \sigma$) the predictive mean. The proposed method produced an RMSE value over 50\% lower than the next lowest value (NARGP), an $R^2$ value approximately 10\%  higher than the next-highest value (NARGP), and a log ML roughly an order of magnitude higher than the next-highest value (NARGP). In terms of RMSE, both NARGP and Kennedy O'Hagan perform comparably to single-fidelity kriging. This is due to the limiting Markovian property present in the KOH and NARGP estimators; statistical information from the lowest-fidelity function is unable to propagate up the levels of fidelity and combine nonlinearly with the medium-fidelity function to accurately emulate the true high-fidelity function. 
\begin{table}[htb!]
    \centering
    \begin{tabular}{cccc}
        \toprule 
         \textbf{Approach} & \textbf{RMSE} & $\mathbf{R^2}$ & \textbf{log ML}\\ \midrule 
Proposed Method       &    \textbf{3.681e-02}   & \textbf{0.9731}  &  \textbf{10.6686} \\
Kennedy O'Hagan       &    8.951e-02   & 0.8560  &  2.1112 \\
NARGP                 &    5.765e-02   & 0.9415  &  1.2904 \\
Kriging               &    7.357e-02   & 0.8874  &  -0.8865 \\ \bottomrule 
    \end{tabular}
    \caption{Performance metrics on the analytical test example for each surrogate model evaluated at 250 linearly spaced inputs across the input space [0,5].}
    \label{tab:synthetic-results}
\end{table}

\subsection{Extrapolation of Constrained Laminar Flame Speed Data} \label{laminar-flame-speed}
In this example, we seek to predict laminar flame speed from temperature and equivalence ratio (see \cite{adusumilli_laminar_2021} for details). We will confine our high-fidelity data to a \emph{design space} such that the trained high-fidelity surrogate model must \emph{extrapolate} outside of this constrained domain. \Cref{tab:species} details each chemical model used and how much training data was generated by each model. 

\begin{table}[htb!]
    \centering
    \begin{tabular}{ccccc}
        \toprule 
         \textbf{Model} & \textbf{\# of Species} & \textbf{Reaction Steps} & \textbf{\# of Samples} & \textbf{Temperatures Simulated} (K) \\ \midrule 
         USC-II, \cite{wang_experimental_2022} & 111 & 784 & 16& \{450, 550\}\\ 
         Lu, \cite{lu_32_2017} & 32 & 206 & 80& \{450, 550,650,750,850\} \\ 
         Zettervall, \cite{zettervall_methodology_2021} & 23 & 66 & 160& \{450, 550,650,750,850\} \\ 
         AFRL, \cite{hassan_dynamic_2019} & 7 & 3 & 320& \{450, 550,650,750,850\} \\ 
         USAFA, \cite{hassan_error_2025} & 7 & 3 & 640& \{450, 550,650,750,850\} \\ \bottomrule 
    \end{tabular}
    \caption{Experimental details for multifidelity laminar flame speed calculations. }
    \label{tab:species}
\end{table}
Each of the models in \Cref{tab:species} was evaluated on equivalence ratio $(\phi)$ sweeps between 0.6 and 1.4 for a total number of samples indicated by the \textbf{\# of Samples} column. The most advanced model is the USC II which is considered to be the high-fidelity reference in this experiment. The high-fidelity training data are confined to a design space which is limited by temperature ($\leq 550$). \Cref{fig:lfs-objective} shows how the high-fidelity data varies with temperature and equivalence ratio. The black points represent the training data constrained by temperature and the red points represent unseen testing data on which each model is validated. 

\begin{figure}[htb!]
    \centering
    \includegraphics[width=1.0 \linewidth]{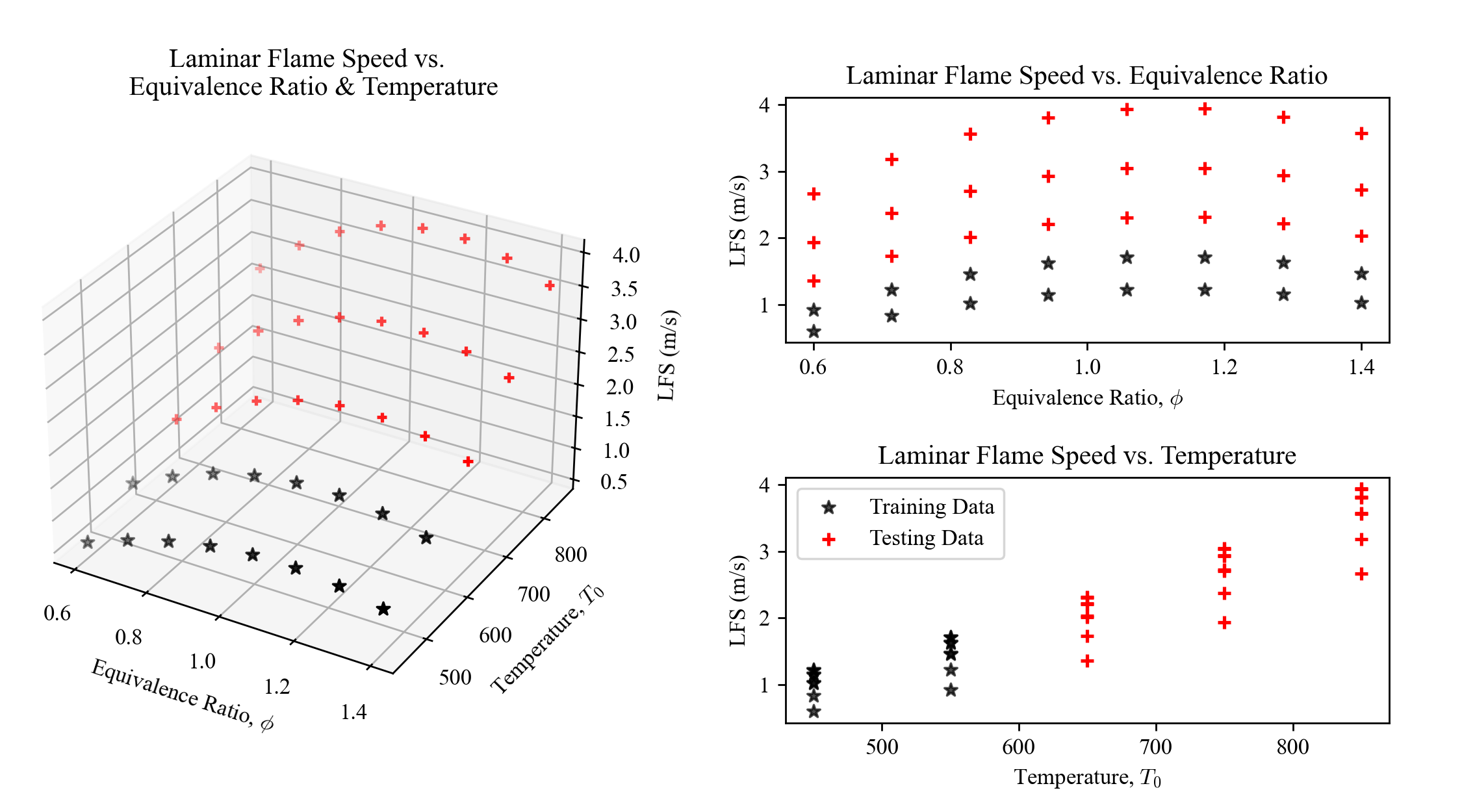}
    \caption{A visualization of the high-fidelity (USC II) training and testing data. The blue dots show the high-fidelity training data simulated at temperatures 450K and 550K. The black stars show the unseen testing data at temperatures 650K, 750K, and 850K. We emphasize how the behavior of the flame speed outside the design space (temperatures $\leq 550K$) differs significantly from the training data.}
    \label{fig:lfs-objective}
\end{figure}

As shown in \Cref{fig:lfs-objective}, the high-fidelity regression problem is fundamentally an extrapolation problem; we wish to predict high-fidelity model behavior outside of what is seen in the training data. The plots in \Cref{fig:lfs_plots} show the proposed method applied to the LFS data, compared with the Lu 206-step mechanism (the next-highest-fidelity computer model). \Cref{tab:lfs-results} contains the performance metrics of each method evaluated on the high-fidelity testing data. 

\begin{figure}[htb!]
    \centering
    \includegraphics[width=1.0\linewidth]{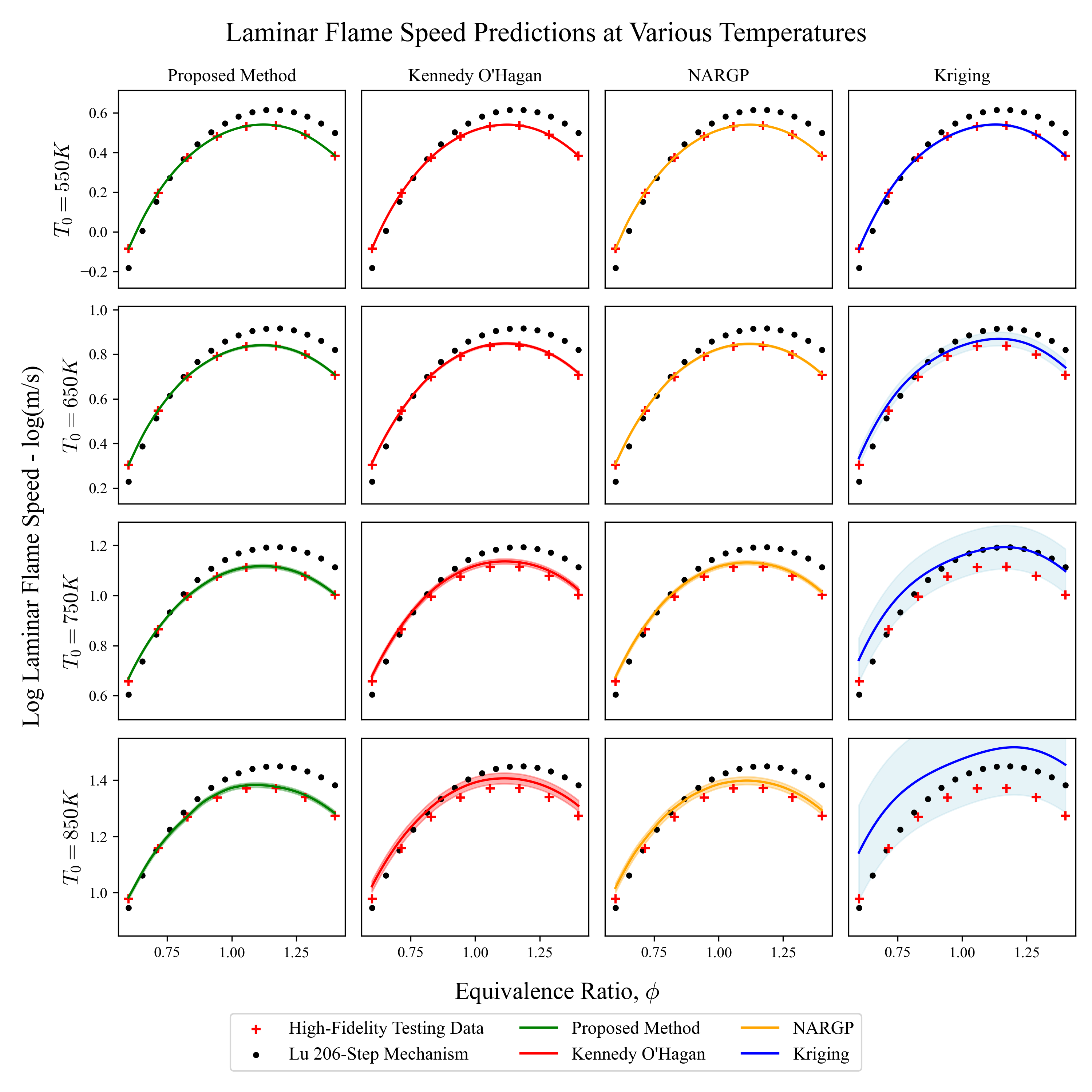}
    \caption{The laminar flame speed experiment captured at four different temperatures. The shaded regions represents a $\pm 2 \sigma$ confidence interval for the predictive posterior of each GP.}
    \label{fig:lfs_plots}
\end{figure}

\begin{table}[b!] 
    \centering
    \begin{tabular}{ccccc|c}\toprule 
         \textbf{Approach} & 550K* &  650K & 750K & 850K & \textbf{log ML} \\ \midrule 
        \textbf{Proposed Method}& \textbf{6.444e-03}  &  \textbf{6.176e-03}  & \textbf{6.984e-03} & \textbf{8.598e-03} & \textbf{82.8062} \\ 
        Kennedy O'Hagan     &             6.576e-03  &          8.442e-03 &          2.078e-02  &          3.636e-02 &          69.0738 \\ 
        NARGP               &             6.402e-03  &           7.453e-03 &           1.735e-02  &          2.990e-02 &          70.7028 \\ 
        Kriging             &             8.119e-03  &           3.381e-02 &          8.995e-02  &          1.680e-01 &          33.0798 \\ \bottomrule 
    \end{tabular}
    \caption{RMSE and log ML metrics for learned surrogate model predictions of laminar flame speed at various temperatures and equivalence ratios. *The temperature 550K is contained in the training data so we expect all models to perform comparably.}
    \label{tab:lfs-results}
\end{table}

The proposed method remains both more accurate in its mean predictions than the 206-step Lu mechanism and other multi-fidelity methods, but also contains the true flame speed values within its $\pm 2 \sigma$ confidence interval throughout the extrapolation process. At the furthest extrapolation, 850K, the proposed method achieves an RMSE 13x less than the next-best surrogate model (Kennedy O'Hagan). At 850K, the $\pm 2 \sigma$ confidence interval from single-fidelity Kriging is so wide that it is virtually unusable in estimating the true model. 

\subsection{Sparse Interpolation of Velocity Flow Field} \label{velocimetry}
Multifidelity machine learning can allow practitioners to accurately fill in gaps between sparse high-fidelity data. To show how the proposed method can improve the accuracy of such interpolation, we examine a supersonic computational fluid dynamics simulation (see \cite{peterson_overview_2024} for details). The multifidelity data was generated from two-dimensional Large Eddy Simulation (LES) and RANS (Reynolds-Averaged Navier Stokes) computer models evaluated at various spatial resolutions, outlined in \Cref{tab:cfd-sims}. In this scenario, the 125$\mu$m high-fidelity LES model is considered our high-fidelity model, as a stand-in for coarse experimental data such, as that proposed in \cite{pitz_chapter_2020}. The high-fidelity flow field contains large velocity gradients, including shock waves;  alternative approaches such as reduced-order modeling or data assimilation often have difficulty resolving these extreme features \cite{lucia_reduced_2001,zhou_neural_2026}. We seek to predict horizontal velocity (which dominates in magnitude) from x and y spatial coordinates. For high-fidelity training data, we selected a grid of points 5mm apart in the recirculation region (X coordinates $\leq 0.04$m) of the high-fidelity flow field (shown in \Cref{fig:sample-plot}). 

\begin{table}[htb!]
  \centering
  \begin{tabular}{cccc}
    \toprule
 \textbf{Simulation Technique} & \textbf{Resolution} & \textbf{\# of Cells} & \textbf{\# of Training Examples} \\
    \midrule
 LES  & 125\,$\mu$m & 116,021 & 45 \\
 LES  & 177\,$\mu$m & 58, 179 & 58,179\\
 LES  & 250\,$\mu$m & 29, 211 & 29, 211 \\
 LES  & 500\,$\mu$m & 7,406 & 7,406 \\
 RANS & 500\,$\mu$m & 7,406 & 7,406 \\
    \bottomrule
  \end{tabular}
  \caption{Experimental details for multifidelity velocity simulations.}
  \label{tab:cfd-sims}
\end{table}

Because of the plentiful low-fidelity training data, instead of using \glspl{gp} as surrogate models for the low-fidelity data, we used K-Nearest Neighbors (KNN) models, an efficient regression algorithm for large sets of training data and low-dimensional inputs. Training off-the-shelf KNN surrogate models for all four low-fidelity models took fewer than 5 seconds and just three lines of single-threaded code. In contrast, because the Kennedy O'Hagan and NARGP autoregressive models rely on GP modeling at every fidelity level, these frameworks were prohibitively expensive to implement due to the number of training data points available for the 177$\mu$m and 250$\mu$m simulations (58,179 and 29,211, respectively). To enable comparison with these frameworks, we used KNN approximations in place of trained low-fidelity \glspl{gp} for both the Kennedy O'Hagan and NARGP approaches as well as in our proposed approach. While this is not a 1:1 comparison, it underscores the need for efficient low-fidelity surrogate models in the presence of large volumes of training data. For the high-fidelity GPs, we used standard scaling of the inputs and low-fidelity model evaluations such that each entry of $\phi_1(\bx)$ had zero mean and unit variance. Performance metrics for each of the methods tested are displayed in \Cref{tab:velocimetry-results}.  A comparison of the KNN approximations of the low-fidelity models, the predictions of the proposed method compared to high-fidelity, and the predictions of single-fidelity GP regression/kriging compared to high-fidelity are shown in \Cref{fig:comparison-plot}. 

\begin{figure}[htb!]
    \centering
    \includegraphics[width=1.0\linewidth]{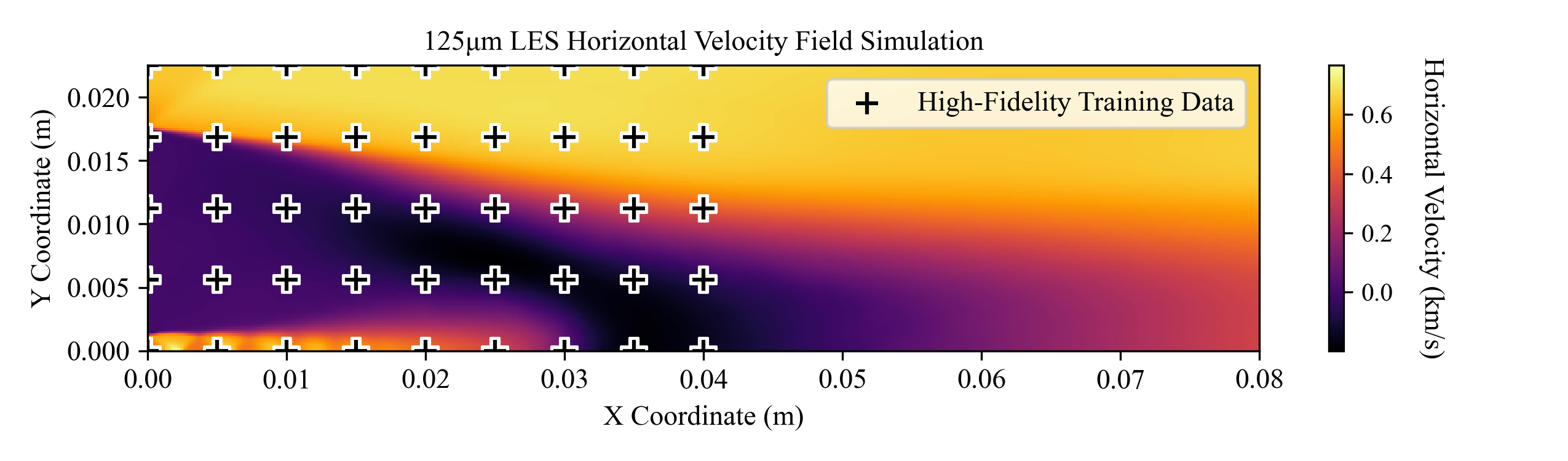}
    \caption{The full high-fidelity (125 $\mu$m LES) flow field with sparse high-fidelity training data indicated with the ``+'' symbols.}
    \label{fig:sample-plot}
\end{figure}

\begin{figure}[htb!]
    \centering
    \includegraphics[width=1.0\linewidth]{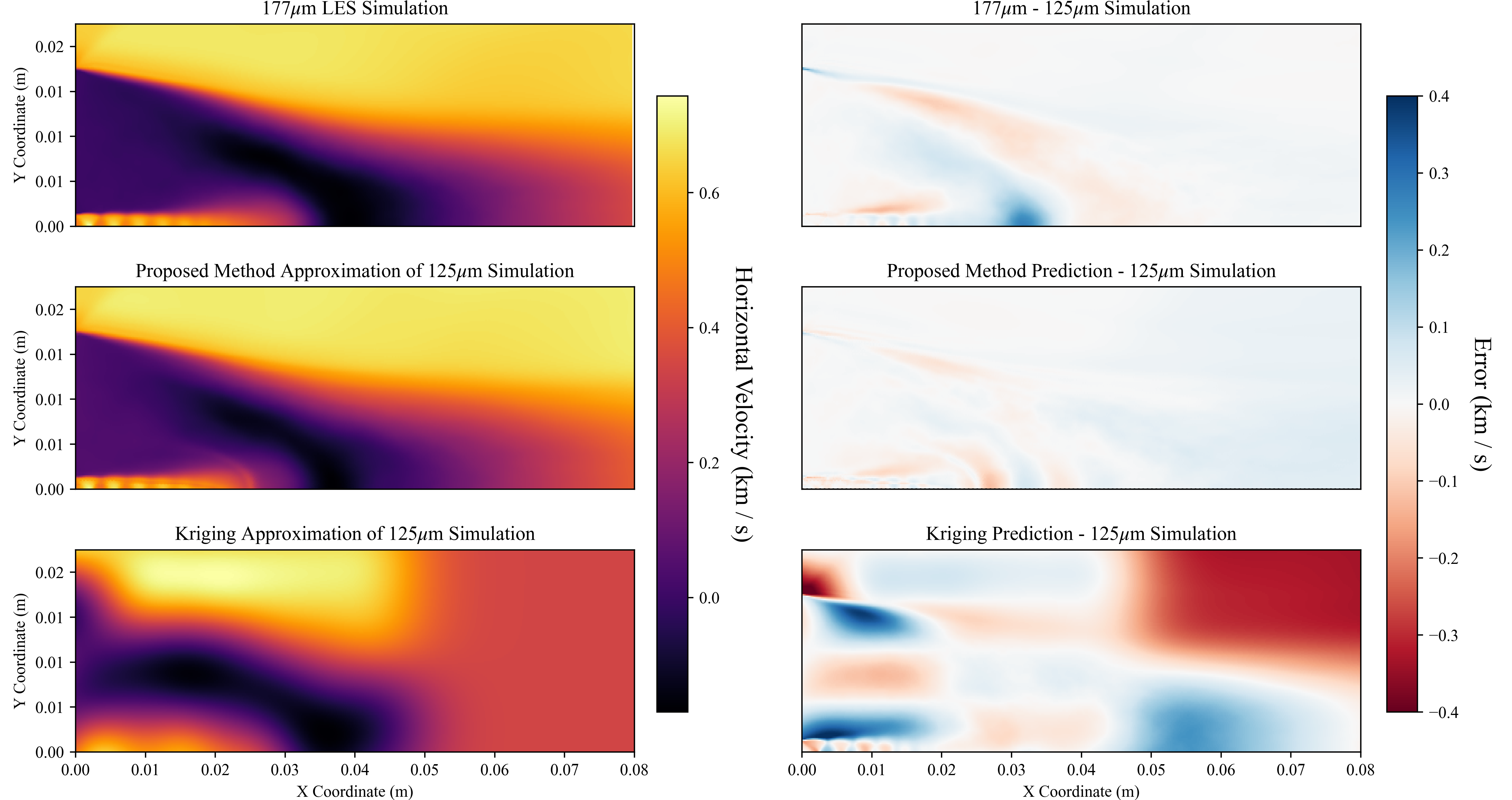}
    \caption{Results of the sparse flow-field interpolation experiment. The left half of the plots show approximations of high- and low-fidelity flow-fields. The right half of plots shows the error between the approximation to the target (obtained by subtracting the true simulation from the approximation). The first row compares the best low-fidelity simulation to the high-fidelity. The second row compares the proposed method's predictions with the true high-fidelity simulation. The last row compares single-fidelity kriging predictions with the true high-fidelity simulation.}
    \label{fig:comparison-plot}
\end{figure}

\begin{table}[htb!]
    \centering
    \begin{tabular}{ccccc}
         \toprule 
         \textbf{Appoach} & \textbf{RMSE} & $\mathbf{R^2}$ & \textbf{log ML}  \\
         \midrule 
Proposed Method        &  \textbf{2.1479e-02}  &  \textbf{0.9971}  &  \textbf{133.2262} \\
Kennedy OH             &  2.5785e-02  &  0.9951  &  90.2265 \\
NARGP                  &  6.0544e-02  &  0.9897  &  104.3015 \\
Kriging                &  1.3079e-01  &  0.9112  &  32.6717 \\
177$\mu$m             &  4.1232e-02  &  0.9843  &  -- \\ 
250$\mu$m             &  7.8781e-02  &  0.9418  &  -- \\ 
500$\mu$m             &  1.1332e-01  &  0.8821  &  -- \\ 
RANS$\mu$m            &  6.7716e-02  &  0.9569  &  -- \\  \bottomrule
    \end{tabular}
    \caption{Performance metrics for the sparse flow-field interpolation experiment comparing learned high-fidelity surrogate models and low-fidelity CFD models to the true high-fidelity flow field.}
    \label{tab:velocimetry-results}
\end{table}

We emphasize that single-fidelity \gls{gp} regression/kriging, achieving the worst RMSE of all models on the testing data, fully interpolates its training data; with scarce data, low training error provides little guarantee of a reliable surrogate model. The proposed method achieves better performance metrics than the existing multifidelity methods tested and all low-fidelity computer models. This experiment demonstrates the ability of learned multifidelity surrogate models to significantly outperform lower-resolution CFD simulations in the presence of scarce high-fidelity data. We also demonstrate that non-GP regression methods (e.g., KNN) for the low-fidelity surrogate models enable vastly cheaper training, as evidenced by our hardware's inability to train unapproximated low-fidelity \glspl{gp} in this problem. 

Finally, one key advantage of GPs over alternative regression methods is their estimation of predictive uncertainty. \Cref{fig:validation-plot} shows two plots: one showing the difference between the surrogate model's mean function and the true function, the other showing the standard deviation of the surrogate model's predictive posterior distribution. Even on unseen testing data, the predictive uncertainty (standard deviation) of the learned surrogate model is over 65\% correlated with the true error over the entire flow field. This uncertainty estimation provides an additional layer of trust in the model's online predictions. However, we emphasize that GP regression is only accurate if the true model belongs to the hypothesis class specified by the kernel and mean functions. Physical constraints, such as positivity and conservation laws, may be violated by GP-based methods; thus, restricting GP priors to only include physically feasible functions can further increase the credibility of model uncertainty estimates. A 99\% confidence interval of the proposed method's predictions contains the true flow field only 92.4\% of the time; this is evidence of slight model misspecification which could be remedied by more physically-informed kernels and mean functions, which we will leave to future work. 

\begin{figure}[htb!]
    \centering
    \includegraphics[width=1.0\linewidth]{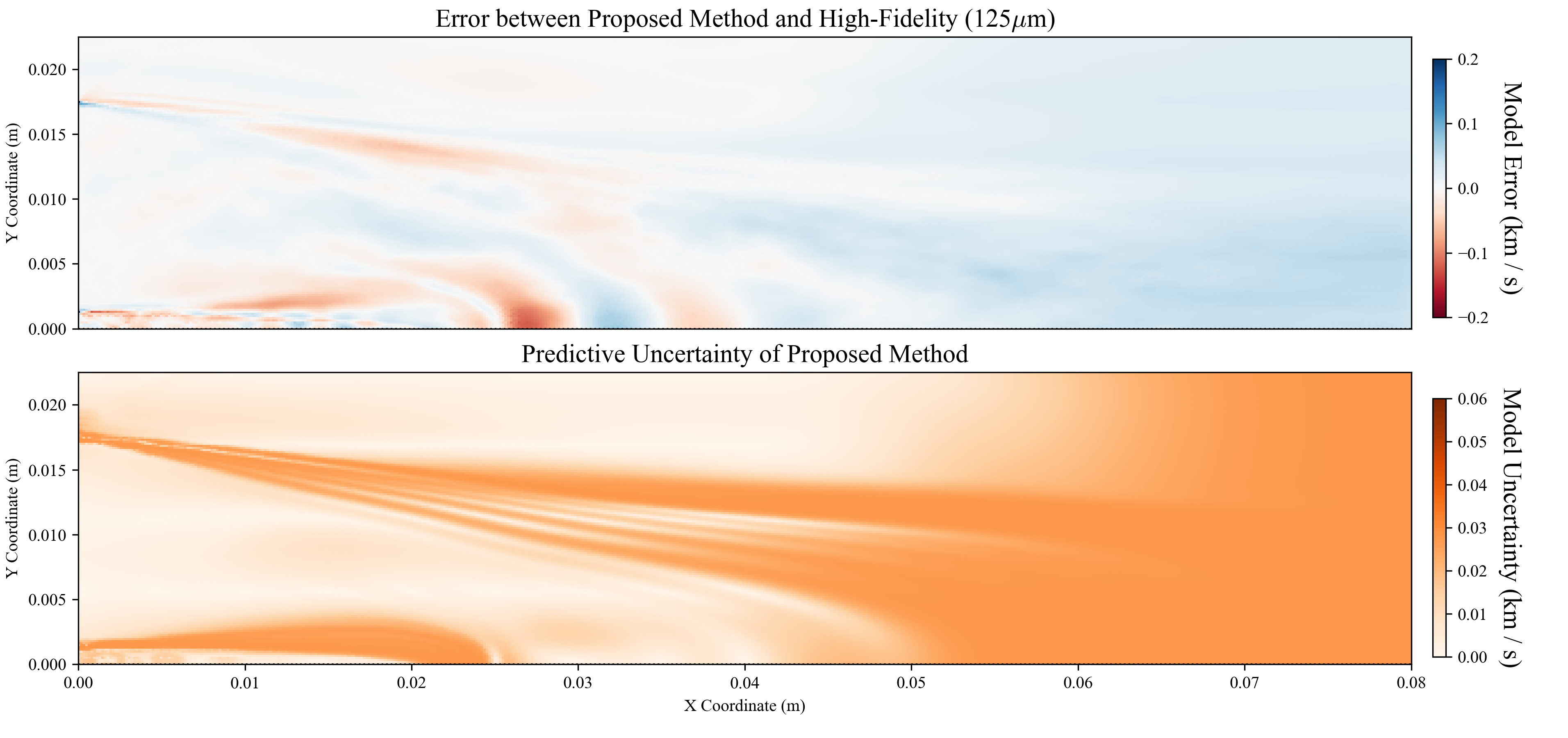}
    \caption{Comparison between model error and predictive uncertainty. The top plot shows the difference between the learned model's predictive mean and the true flow field (blue regions represent over-prediction and red regions represent under-prediction). The bottom plot displays the standard deviation of the surrogate model's Gaussian posterior predictive distribution, the square-root of the variance in \cref{eqn:gpr-posterior}. The model uncertainty is roughly $65$\% correlated with absolute error.}
    \label{fig:validation-plot}
\end{figure}

\section{Conclusion \& Discussion} \label{conclusion}

In this work, we propose a multifidelity machine learning approach which improves upon existing single- and multifidelity \gls{gp}-based methods. This improvement is achieved by removing several limitations of existing estimators, specifically the first-order autoregressive structure, the assumption of noiseless and nested training data, the assumption of linear relationships between levels of fidelity, and the requirement of \glspl{gp} as low-fidelity surrogate models. This allows nonlinear combinations of all low-fidelity surrogate models to influence the prediction of the high-fidelity model. We demonstrate improvements over single-fidelity kriging, Kennedy O'Hagan, and NARGP estimators in various accuracy metrics on three benchmark problems. The proposed method shows an ability to extrapolate outside of a constrained high-fidelity design space on a chemical kinetics example and to interpolate a highly nonlinear velocity field from scarce training data.  

Future work utilizing this framework may take several meaningful directions. First, the many existing methods which modify the \gls{gp} formulation in \cite{rasmussen_gaussian_2008} may be used to improve the accuracy and scalability of the high-fidelity surrogate model (e.g., deep kernel methods, sparse approximations, deep GPs, hierarchical Bayesian approaches, non-Gaussian assumptions, etc.). As seen in \Cref{fig:synthetic-results,fig:validation-plot}, the proposed method produces undesired oscillations in its predictive posterior; further investigation into mitigating such artifacts may result in more accurate estimators. Moreover, the proposed approach does not incorporate any physical constraints in training the high-fidelity surrogate model. One way to increase the credibility of the predictive posterior produced by the proposed method is by restricting the high-fidelity GP prior to physically feasible candidates. Additionally, exploration into how to optimally preprocess the low-fidelity model evaluations as inputs to the kernel may improve generalization and accelerate convergence to optimal hyperparameters. Specifically, the ordering of the low-fidelity models affects the predictions of both the proposed method and the existing methods; investigation into optimal low-fidelity model ordering schemes may yield more accurate high-fidelity estimates. Lastly, investigation into the tradeoff between offline training cost and model accuracy for different low-fidelity surrogate models may lead to effective active learning and experimental design schemes across all levels of fidelity.

\section{Acknowledgments} 
Atticus Rex was supported in parts by the National Science Foundation under Grant No. DGE-2039655 and by the Department of Defense (DoD) High Performance Computing Modernization Program in collaboration with an appointment to the DoD Research Participation Program administered by the Oak Ridge Institute for Science and Education (ORISE) through an interagency agreement between the U.S. Department of Energy (DOE) and the DoD. ORISE is managed by ORAU under DOE contract number DE-SC0014664. Elizabeth Qian was supported in parts by the US Department of Energy Office of Science under DE-SC0024721.
Both authors acknowledge partial support by the National Science Foundation under CMMI-2442140. Any opinions, findings, and conclusions or recommendations expressed in this material are those of the authors and do not necessarily reflect the views of the NSF, DoD, DOE, or ORAU/ORISE. 

\newpage 

\appendix 

\section{Algorithmic Complexities for Alternative Regression Models} \label{alternative-algo-costs}

\begin{table}[htbp!]
    \centering
    \begin{tabular}{ccccc}
         \toprule \textbf{Method}&  \textbf{Training Time } & \textbf{Training Space} & \textbf{Prediction Time } & \textbf{Prediction Space}\\ \midrule 
         Kriging/GP & $\setO(kN^3) $ & $\setO(N^2)$ & $\setO(N)$ & $\setO(N)$\\ 
         K-Nearest Neighbors & $\setO(Nd)$ & $\setO(Nd)$ & $\setO(Nd)$ & $\setO(Nd)$ \\ 
         Deep Neural Network & $\setO(kNd) $ & $\setO(Nd)$ & $\setO(1)$ & $\setO(1)$ \\ 
         OLS/Ridge Regression & $\setO(Nd^2 + d^3)$ & $\setO(Nd + d^2)$ & $\setO(d)$ & $\setO(d)$ \\ 
         Lasso Regression & $\setO(kNd)$ & $\setO(Nd)$ &$\setO(d)$ & $\setO(d)$ \\ 
         Polynomial Regression & $\setO(N d'^2 + d'^3)$ & $\setO(N d')$ & $\setO(d')$ & $\setO(d')$\\ 
         Decision Tree & $\setO(N d \log N)$ & $\setO(Nd)$ & $\setO(\log N)$ & $\setO(N)$\\ 
         Random Forest & $\setO(N d \log N)$ & $\setO(N)$ & $\setO(\log N)$ & $\setO(N)$\\ 
         \bottomrule 
    \end{tabular}
    \caption{Offline training and online prediction algorithmic costs for different regression models with respect to number of training data points $N$, input dimension $d$, polynomial input dimension $d'$, and number of gradient descent steps $k$. We note that for some of these expressions, important model parameters such as number of trees in Random Forest and hidden layer count/dimension for Neural Networks have been omitted. We emphasize that all non-GP methods incur a sub-cubic training time complexity and a sub-quadratic training space complexity with respect to the number of training data points. }
    \label{tab:algo-comparison}
\end{table}

\printbibliography[heading=bibintoc]

@article{yang_when_2020,
	title = {When {Bifidelity} {Meets} {CoKriging}: {An} {Efficient} {Physics}-{Informed} {MultiFidelity} {Method}},
	volume = {42},
	issn = {1064-8275, 1095-7197},
	shorttitle = {When {Bifidelity} {Meets} {CoKriging}},
	url = {https://epubs.siam.org/doi/10.1137/18M1231353},
	doi = {10.1137/18M1231353},
	language = {en},
	number = {1},
	urldate = {2026-06-23},
	journal = {SIAM Journal on Scientific Computing},
	author = {Yang, Xiu and Zhu, Xueyu and Li, Jing},
	month = jan,
	year = {2020},
	pages = {A220--A249},
}

@article{yang_physics-informed_2019,
	title = {Physics-{Informed} {CoKriging}: {A} {Gaussian}-{Process}-{Regression}-{Based} {Multifidelity} {Method} for {Data}-{Model} {Convergence}},
	volume = {395},
	issn = {00219991},
	shorttitle = {Physics-{Informed} {CoKriging}},
	url = {http://arxiv.org/abs/1811.09757},
	doi = {10.1016/j.jcp.2019.06.041},
	urldate = {2026-06-23},
	journal = {Journal of Computational Physics},
	author = {Yang, Xiu and Barajas-Solano, David and Tartakovsky, Guzel and Tartakovsky, Alexandre},
	month = oct,
	year = {2019},
	note = {arXiv:1811.09757 [stat.ML]},
	pages = {410--431},
}

@article{spitieris_bayesian_2023,
	title = {Bayesian {Calibration} of {Imperfect} {Computer} {Models} using {Physics}-{Informed} {Priors}},
	volume = {24},
	issn = {1533-7928},
	url = {http://jmlr.org/papers/v24/22-0676.html},
	number = {108},
	urldate = {2026-06-23},
	journal = {Journal of Machine Learning Research},
	author = {Spitieris, Michail and Steinsland, Ingelin},
	year = {2023},
	pages = {1--39},
}

@article{raissi_numerical_2018,
	title = {Numerical {Gaussian} {Processes} for {Time}-{Dependent} and {Nonlinear} {Partial} {Differential} {Equations}},
	volume = {40},
	issn = {1064-8275},
	url = {https://epubs.siam.org/doi/10.1137/17M1120762},
	doi = {10.1137/17M1120762},
	number = {1},
	urldate = {2026-06-13},
	journal = {SIAM Journal on Scientific Computing},
	publisher = {Society for Industrial and Applied Mathematics},
	author = {Raissi, Maziar and Perdikaris, Paris and Karniadakis, George Em},
	month = jan,
	year = {2018},
	pages = {A172--A198},
}

@article{raissi_machine_2017,
	title = {Machine {Learning} of {Linear} {Differential} {Equations} using {Gaussian} {Processes}},
	volume = {348},
	issn = {00219991},
	url = {http://arxiv.org/abs/1701.02440},
	doi = {10.1016/j.jcp.2017.07.050},
	urldate = {2026-06-13},
	journal = {Journal of Computational Physics},
	author = {Raissi, Maziar and Karniadakis, George Em},
	month = nov,
	year = {2017},
	note = {arXiv:1701.02440 [cs.LG]},
	pages = {683--693},
}

@book{zettervall_methodology_2021,
	address = {Lund},
	title = {Methodology for developing reduced reaction mechanisms, and their use in combustion simulations},
	isbn = {978-91-7895-729-3},
	language = {en},
	publisher = {Division of Combustion Physics, Lund University},
	author = {Zettervall, Niklas},
	year = {2021},
}

@book{neal_bayesian_1996,
	address = {New York, NY},
	series = {Lecture {Notes} in {Statistics}},
	title = {Bayesian {Learning} for {Neural} {Networks}},
	volume = {118},
	copyright = {http://www.springer.com/tdm},
	isbn = {978-0-387-94724-2 978-1-4612-0745-0},
	url = {http://link.springer.com/10.1007/978-1-4612-0745-0},
	doi = {10.1007/978-1-4612-0745-0},
	urldate = {2026-03-16},
	publisher = {Springer},
	author = {Neal, Radford M.},
	editor = {Bickel, P. and Diggle, P. and Fienberg, S. and Krickeberg, K. and Olkin, I. and Wermuth, N. and Zeger, S.},
	year = {1996},
}

@article{stein_universal_1991,
	title = {Universal {Kriging} and {Cokriging} as a {Regression} {Procedure}},
	volume = {47},
	issn = {0006-341X},
	url = {https://www.jstor.org/stable/2532147},
	doi = {10.2307/2532147},
	number = {2},
	urldate = {2026-03-16},
	journal = {Biometrics},
	publisher = {International Biometric Society},
	author = {Stein, A. and Corsten, L. C. A.},
	year = {1991},
	pages = {575--587},
}

@incollection{garland_multi-fidelity_2020,
	address = {Cham},
	title = {Multi-{Fidelity} for {MDO} {Using} {Gaussian} {Processes}},
	isbn = {978-3-030-39126-3},
	url = {https://doi.org/10.1007/978-3-030-39126-3_8},
	doi = {10.1007/978-3-030-39126-3_8},
	language = {en},
	urldate = {2026-03-05},
	booktitle = {Aerospace {System} {Analysis} and {Optimization} in {Uncertainty}},
	publisher = {Springer International Publishing},
	author = {Garland, Nicolas and Le Riche, Rodolphe and Richet, Yann and Durrande, Nicolas},
	editor = {Brevault, Loïc and Balesdent, Mathieu and Morio, Jérôme},
	year = {2020},
	pages = {295--320},
}

@article{christen_markov_2005,
	title = {Markov chain {Monte} {Carlo} {Using} an {Approximation}},
	volume = {14},
	issn = {1061-8600},
	url = {https://doi.org/10.1198/106186005X76983},
	doi = {10.1198/106186005X76983},
	number = {4},
	urldate = {2026-03-05},
	journal = {Journal of Computational and Graphical Statistics},
	publisher = {Taylor \& Francis},
	author = {Christen, J. Andrés and Fox, Colin},
	month = dec,
	year = {2005},
	note = {\_eprint: https://doi.org/10.1198/106186005X76983},
	pages = {795--810},
}

@misc{lu_32_2017,
	title = {A 32 species skeletal mechanism developped by {Dr} {T}. {Lu} \& co-workers},
	url = {https://www.cerfacs.fr/cantera/mechanisms/eth.php},
	urldate = {2025-07-22},
	author = {Lu, T},
	year = {2017},
}

@misc{zhou_neural_2026,
	title = {Neural ensemble {Kalman} filter: {Data} assimilation for compressible flows with shocks},
	shorttitle = {Neural ensemble {Kalman} filter},
	url = {http://arxiv.org/abs/2602.23461},
	doi = {10.48550/arXiv.2602.23461},
	urldate = {2026-03-03},
	publisher = {arXiv},
	author = {Zhou, Xu-Hui and Beronilla, Lorenzo and Sleeman, Michael K. and Hu, Hangchuan and Morzfeld, Matthias and Stuart, Andrew M. and Zaki, Tamer A.},
	month = feb,
	year = {2026},
	note = {arXiv:2602.23461 [physics]},
}

@phdthesis{lucia_reduced_2001,
	type = {Ph.{D}. thesis},
	title = {Reduced order modeling for high-speed flows with moving shocks},
	url = {https://ui.adsabs.harvard.edu/abs/2001PhDT.......167L},
	urldate = {2026-03-03},
	author = {Lucia, David Jerry},
	month = jan,
	year = {2001},
	note = {ADS Bibcode: 2001PhDT.......167L},
}

@incollection{hassan_error_2025,
	title = {Error {In} {Shock} {Tube} {Ignition} {Delay} {Time} {Predictions} {Due} to {Bifurcation} and {Boundary} {Layer} {Effects}},
	url = {https://arc.aiaa.org/doi/abs/10.2514/6.2025-2140},
	doi = {10.2514/6.2025-2140},
	urldate = {2026-03-03},
	booktitle = {{AIAA} {SCITECH} 2025 {Forum}},
	publisher = {American Institute of Aeronautics and Astronautics},
	author = {Hassan, Ez A. and Hageman, Mitchell D. and Knadler, Michael S. and Lucas, Stephen},
	month = mar,
	year = {2025},
	note = {\_eprint: https://arc.aiaa.org/doi/pdf/10.2514/6.2025-2140},
}

@article{hassan_dynamic_2019,
	title = {Dynamic {Hybrid} {Reynolds}-{Averaged} {Navier}–{Stokes}/{Large}-{Eddy} {Simulation} of a {Supersonic} {Cavity}: {Chemistry} {Effects}},
	volume = {35},
	issn = {0748-4658},
	shorttitle = {Dynamic {Hybrid} {Reynolds}-{Averaged} {Navier}–{Stokes}/{Large}-{Eddy} {Simulation} of a {Supersonic} {Cavity}},
	url = {https://arc.aiaa.org/doi/10.2514/1.B37092},
	doi = {10.2514/1.B37092},
	number = {1},
	urldate = {2026-03-03},
	journal = {Journal of Propulsion and Power},
	publisher = {American Institute of Aeronautics and Astronautics},
	author = {Hassan, Ez and Peterson, David M. and Walters, D. Keith and Luke, Edward A.},
	month = jan,
	year = {2019},
	pages = {201--212},
}

@article{yano_optimization-based_2012,
	title = {An optimization-based framework for anisotropic simplex mesh adaptation},
	volume = {231},
	issn = {0021-9991},
	url = {https://www.sciencedirect.com/science/article/pii/S0021999112003749},
	doi = {10.1016/j.jcp.2012.06.040},
	number = {22},
	urldate = {2026-02-12},
	journal = {Journal of Computational Physics},
	author = {Yano, Masayuki and Darmofal, David L.},
	month = sep,
	year = {2012},
	pages = {7626--7649},
}

@article{amsallem_design_2015,
	title = {Design optimization using hyper-reduced-order models},
	volume = {51},
	issn = {1615-147X, 1615-1488},
	url = {http://link.springer.com/10.1007/s00158-014-1183-y},
	doi = {10.1007/s00158-014-1183-y},
	language = {en},
	number = {4},
	urldate = {2026-02-12},
	journal = {Structural and Multidisciplinary Optimization},
	author = {Amsallem, David and Zahr, Matthew and Choi, Youngsoo and Farhat, Charbel},
	month = apr,
	year = {2015},
	pages = {919--940},
}

@article{zahr_multilevel_2017,
	title = {A multilevel projection‐based model order reduction framework for nonlinear dynamic multiscale problems in structural and solid mechanics},
	volume = {112},
	copyright = {http://onlinelibrary.wiley.com/termsAndConditions\#vor},
	issn = {0029-5981, 1097-0207},
	url = {https://onlinelibrary.wiley.com/doi/10.1002/nme.5535},
	doi = {10.1002/nme.5535},
	language = {en},
	number = {8},
	urldate = {2026-02-12},
	journal = {International Journal for Numerical Methods in Engineering},
	author = {Zahr, Matthew J. and Avery, Philip and Farhat, Charbel},
	month = nov,
	year = {2017},
	pages = {855--881},
}

@inproceedings{song_general_2019,
	title = {A {General} {Framework} for {Multi}-fidelity {Bayesian} {Optimization} with {Gaussian} {Processes}},
	issn = {2640-3498},
	url = {https://proceedings.mlr.press/v89/song19b.html},
	language = {en},
	urldate = {2026-02-12},
	booktitle = {Proceedings of the {Twenty}-{Second} {International} {Conference} on {Artificial} {Intelligence} and {Statistics}},
	publisher = {PMLR},
	author = {Song, Jialin and Chen, Yuxin and Yue, Yisong},
	month = apr,
	year = {2019},
	pages = {3158--3167},
}

@misc{li_multi-fidelity_2024,
	title = {Multi-{Fidelity} {Methods} for {Optimization}: {A} {Survey}},
	shorttitle = {Multi-{Fidelity} {Methods} for {Optimization}},
	url = {http://arxiv.org/abs/2402.09638},
	doi = {10.48550/arXiv.2402.09638},
	urldate = {2026-02-12},
	publisher = {arXiv},
	author = {Li, Ke and Li, Fan},
	month = feb,
	year = {2024},
	note = {arXiv:2402.09638 [cs]},
}

@article{forrester_multi-fidelity_2007,
	title = {Multi-fidelity optimization via surrogate modelling},
	volume = {463},
	issn = {1364-5021},
	url = {https://doi.org/10.1098/rspa.2007.1900},
	doi = {10.1098/rspa.2007.1900},
	number = {2088},
	urldate = {2026-02-12},
	journal = {Proceedings of the Royal Society A: Mathematical, Physical and Engineering Sciences},
	author = {Forrester, Alexander I.J and Sóbester, András and Keane, Andy J},
	month = oct,
	year = {2007},
	pages = {3251--3269},
}

@article{keil_relaxed_2024,
	title = {A relaxed localized trust-region reduced basis approach for optimization of multiscale problems},
	volume = {58},
	copyright = {© The authors. Published by EDP Sciences, SMAI 2024},
	issn = {2822-7840, 2804-7214},
	url = {https://www.esaim-m2an.org/articles/m2an/abs/2024/01/m2an220169/m2an220169.html},
	doi = {10.1051/m2an/2023089},
	language = {en},
	number = {1},
	urldate = {2026-02-12},
	journal = {ESAIM: Mathematical Modelling and Numerical Analysis},
	publisher = {EDP Sciences},
	author = {Keil, Tim and Ohlberger, Mario},
	month = jan,
	year = {2024},
	pages = {79--105},
}

@misc{klein_multi-fidelity_2025,
	title = {Multi-fidelity {Learning} of {Reduced} {Order} {Models} for {Parabolic} {PDE} {Constrained} {Optimization}},
	url = {http://arxiv.org/abs/2503.21252},
	doi = {10.48550/arXiv.2503.21252},
	urldate = {2026-02-12},
	publisher = {arXiv},
	author = {Klein, Benedikt and Ohlberger, Mario},
	month = mar,
	year = {2025},
	note = {arXiv:2503.21252 [math]},
}

@article{qian_certified_2017,
	title = {A {Certified} {Trust} {Region} {Reduced} {Basis} {Approach} to {PDE}-{Constrained} {Optimization}},
	volume = {39},
	issn = {1064-8275},
	url = {https://epubs.siam.org/doi/abs/10.1137/16M1081981},
	doi = {10.1137/16M1081981},
	number = {5},
	urldate = {2026-02-12},
	journal = {SIAM Journal on Scientific Computing},
	publisher = {Society for Industrial and Applied Mathematics},
	author = {Qian, Elizabeth and Grepl, Martin and Veroy, Karen and Willcox, Karen},
	month = jan,
	year = {2017},
	pages = {S434--S460},
}

@article{qian_multifidelity_2018,
	title = {Multifidelity {Monte} {Carlo} {Estimation} of {Variance} and {Sensitivity} {Indices}},
	volume = {6},
	url = {https://epubs.siam.org/doi/abs/10.1137/17M1151006},
	doi = {10.1137/17M1151006},
	number = {2},
	urldate = {2026-02-12},
	journal = {SIAM/ASA Journal on Uncertainty Quantification},
	publisher = {Society for Industrial and Applied Mathematics},
	author = {Qian, E. and Peherstorfer, B. and O'Malley, D. and Vesselinov, V. V. and Willcox, K.},
	month = jan,
	year = {2018},
	pages = {683--706},
}

@article{lu_multifidelity_2022,
	title = {Multifidelity deep neural operators for efficient learning of partial differential equations with application to fast inverse design of nanoscale heat transport},
	volume = {4},
	url = {https://link.aps.org/doi/10.1103/PhysRevResearch.4.023210},
	doi = {10.1103/PhysRevResearch.4.023210},
	number = {2},
	urldate = {2026-02-12},
	journal = {Physical Review Research},
	publisher = {American Physical Society},
	author = {Lu, Lu and Pestourie, Raphaël and Johnson, Steven G. and Romano, Giuseppe},
	month = jun,
	year = {2022},
	pages = {023210},
}

@article{howard_multifidelity_2024,
	title = {A multifidelity approach to continual learning for physical systems},
	volume = {5},
	issn = {2632-2153},
	url = {https://doi.org/10.1088/2632-2153/ad45b2},
	doi = {10.1088/2632-2153/ad45b2},
	language = {en},
	number = {2},
	urldate = {2026-02-12},
	journal = {Machine Learning: Science and Technology},
	publisher = {IOP Publishing},
	author = {Howard, Amanda and Fu, Yucheng and Stinis, Panos},
	month = may,
	year = {2024},
	pages = {025042},
}

@misc{heinlein_multifidelity_2024,
	title = {Multifidelity domain decomposition-based physics-informed neural networks and operators for time-dependent problems},
	url = {http://arxiv.org/abs/2401.07888},
	doi = {10.48550/arXiv.2401.07888},
	language = {en},
	urldate = {2026-02-12},
	publisher = {arXiv},
	author = {Heinlein, Alexander and Howard, Amanda A. and Beecroft, Damien and Stinis, Panos},
	month = jun,
	year = {2024},
	note = {arXiv:2401.07888 [math]},
}

@article{howard_multifidelity_2023,
	title = {Multifidelity deep operator networks for data-driven and physics-informed problems},
	volume = {493},
	issn = {0021-9991},
	url = {https://www.sciencedirect.com/science/article/pii/S0021999123005570},
	doi = {10.1016/j.jcp.2023.112462},
	urldate = {2026-02-12},
	journal = {Journal of Computational Physics},
	author = {Howard, Amanda A. and Perego, Mauro and Karniadakis, George Em and Stinis, Panos},
	month = nov,
	year = {2023},
	pages = {112462},
}

@article{popov_multifidelity_2021,
	title = {A {Multifidelity} {Ensemble} {Kalman} {Filter} with {Reduced} {Order} {Control} {Variates}},
	volume = {43},
	issn = {1064-8275},
	url = {https://epubs.siam.org/doi/abs/10.1137/20M1349965},
	doi = {10.1137/20M1349965},
	number = {2},
	urldate = {2026-01-26},
	journal = {SIAM Journal on Scientific Computing},
	publisher = {Society for Industrial and Applied Mathematics},
	author = {Popov, Andrey A. and Mou, Changhong and Sandu, Adrian and Iliescu, Traian},
	month = jan,
	year = {2021},
	pages = {A1134--A1162},
}

@article{warne_multifidelity_2022,
	title = {Multifidelity multilevel {Monte} {Carlo} to accelerate approximate {Bayesian} parameter inference for partially observed stochastic processes},
	volume = {469},
	issn = {0021-9991},
	url = {https://www.sciencedirect.com/science/article/pii/S0021999122006052},
	doi = {10.1016/j.jcp.2022.111543},
	urldate = {2026-01-17},
	journal = {Journal of Computational Physics},
	author = {Warne, David J. and Prescott, Thomas P. and Baker, Ruth E. and Simpson, Matthew J.},
	month = nov,
	year = {2022},
	pages = {111543},
}

@article{catanach_bayesian_2020,
	title = {Bayesian inference of {Stochastic} reaction networks using {Multifidelity} {Sequential} {Tempered} {Markov} {Chain} {Monte} {Carlo}},
	volume = {10},
	issn = {2152-5080},
	url = {https://www.osti.gov/biblio/1670752},
	doi = {10.1615/int.j.uncertaintyquantification.2020033241},
	language = {English},
	number = {6},
	urldate = {2026-01-17},
	journal = {International Journal for Uncertainty Quantification},
	publisher = {Begell House},
	author = {Catanach, Thomas A. and Vo, Huy D. and Munsky, Brian},
	month = jun,
	year = {2020},
	note = {Number: SAND2020--7898J; SAND2019--15382J},
}

@article{cai_multi-fidelity_2022,
	title = {Multi-fidelity {Monte} {Carlo}: a pseudo-marginal approach},
	volume = {35},
	shorttitle = {Multi-fidelity {Monte} {Carlo}},
	url = {https://proceedings.neurips.cc/paper_files/paper/2022/hash/8803b9ae0b13011f28e6dd57da2ebbd8-Abstract-Conference.html?utm_source=chatgpt.com},
	language = {en},
	urldate = {2026-01-17},
	journal = {Advances in Neural Information Processing Systems},
	author = {Cai, Diana and Adams, Ryan P.},
	month = dec,
	year = {2022},
	pages = {21654--21667},
}

@misc{peherstorfer_transport-based_2018,
	title = {A transport-based multifidelity preconditioner for {Markov} chain {Monte} {Carlo}},
	url = {http://arxiv.org/abs/1808.09379},
	doi = {10.48550/arXiv.1808.09379},
	urldate = {2026-01-17},
	publisher = {arXiv},
	author = {Peherstorfer, Benjamin and Marzouk, Youssef},
	month = aug,
	year = {2018},
	note = {arXiv:1808.09379 [math]},
}

@article{giles_multilevel_2015,
	title = {Multilevel {Monte} {Carlo} methods},
	volume = {24},
	issn = {0962-4929, 1474-0508},
	url = {https://www.cambridge.org/core/journals/acta-numerica/article/abs/multilevel-monte-carlo-methods/C5AF9A57ED8FF8FDF08074C1071C5511},
	doi = {10.1017/S096249291500001X},
	language = {en},
	urldate = {2026-01-17},
	journal = {Acta Numerica},
	author = {Giles, Michael B.},
	month = may,
	year = {2015},
	pages = {259--328},
}

@misc{gorodetsky_grouped_2024,
	title = {Grouped approximate control variate estimators},
	url = {http://arxiv.org/abs/2402.14736},
	doi = {10.48550/arXiv.2402.14736},
	urldate = {2026-01-12},
	publisher = {arXiv},
	author = {Gorodetsky, Alex A. and Jakeman, John D. and Eldred, Michael S.},
	month = feb,
	year = {2024},
	note = {arXiv:2402.14736 [stat]},
}

@inproceedings{snoek_practical_2012,
	title = {Practical {Bayesian} {Optimization} of {Machine} {Learning} {Algorithms}},
	volume = {25},
	url = {https://papers.nips.cc/paper_files/paper/2012/hash/05311655a15b75fab86956663e1819cd-Abstract.html},
	urldate = {2025-11-03},
	booktitle = {Advances in {Neural} {Information} {Processing} {Systems}},
	publisher = {Curran Associates, Inc.},
	author = {Snoek, Jasper and Larochelle, Hugo and Adams, Ryan P},
	year = {2012},
}

@incollection{berger_bayesian_1985,
	address = {New York, NY},
	title = {Bayesian {Analysis}},
	isbn = {978-1-4757-4286-2},
	url = {https://doi.org/10.1007/978-1-4757-4286-2_4},
	doi = {10.1007/978-1-4757-4286-2_4},
	language = {en},
	urldate = {2025-11-03},
	booktitle = {Statistical {Decision} {Theory} and {Bayesian} {Analysis}},
	publisher = {Springer},
	author = {Berger, James O.},
	editor = {Berger, James O.},
	year = {1985},
	pages = {118--307},
}

@incollection{neal_priors_1996,
	address = {New York, NY},
	title = {Priors for {Infinite} {Networks}},
	isbn = {978-1-4612-0745-0},
	url = {https://doi.org/10.1007/978-1-4612-0745-0_2},
	doi = {10.1007/978-1-4612-0745-0_2},
	language = {en},
	urldate = {2025-11-03},
	booktitle = {Bayesian {Learning} for {Neural} {Networks}},
	publisher = {Springer},
	author = {Neal, Radford M.},
	editor = {Neal, Radford M.},
	year = {1996},
	pages = {29--53},
}

@article{hornik_multilayer_1989,
	title = {Multilayer feedforward networks are universal approximators},
	volume = {2},
	issn = {0893-6080},
	url = {https://www.sciencedirect.com/science/article/pii/0893608089900208},
	doi = {10.1016/0893-6080(89)90020-8},
	number = {5},
	urldate = {2025-11-03},
	journal = {Neural Networks},
	author = {Hornik, Kurt and Stinchcombe, Maxwell and White, Halbert},
	month = jan,
	year = {1989},
	pages = {359--366},
}

@book{hastie_elements_2009,
	title = {The {Elements} of {Statistical} {Learning}: {Data} {Mining}, {Inference}, and {Prediction}, {Second} {Edition}},
	isbn = {978-0-387-84858-7},
	shorttitle = {The {Elements} of {Statistical} {Learning}},
	language = {en},
	publisher = {Springer Science \& Business Media},
	author = {Hastie, Trevor and Tibshirani, Robert and Friedman, Jerome},
	month = aug,
	year = {2009},
	note = {Google-Books-ID: tVIjmNS3Ob8C},
}

@article{vapnik_overview_1999,
	title = {An overview of statistical learning theory},
	volume = {10},
	issn = {1045-9227},
	doi = {10.1109/72.788640},
	language = {eng},
	number = {5},
	journal = {IEEE transactions on neural networks},
	author = {Vapnik, V. N.},
	year = {1999},
	pages = {988--999},
}

@misc{kindap_non-gaussian_2022,
	title = {Non-{Gaussian} {Process} {Regression}},
	url = {http://arxiv.org/abs/2209.03117},
	doi = {10.48550/arXiv.2209.03117},
	urldate = {2025-10-27},
	publisher = {arXiv},
	author = {Kındap, Yaman and Godsill, Simon},
	month = sep,
	year = {2022},
	note = {arXiv:2209.03117 [stat]},
}

@article{paciorek_spatial_2006,
	title = {Spatial {Modelling} {Using} a {New} {Class} of {Nonstationary} {Covariance} {Functions}},
	volume = {17},
	issn = {1180-4009},
	url = {https://pmc.ncbi.nlm.nih.gov/articles/PMC2157553/},
	doi = {10.1002/env.785},
	number = {5},
	urldate = {2025-10-27},
	journal = {Environmetrics},
	author = {Paciorek, Christopher J. and Schervish, Mark J.},
	year = {2006},
	pages = {483--506},
}

@inproceedings{lalchand_sparse_2022,
	address = {Red Hook, NY, USA},
	series = {{NIPS} '22},
	title = {Sparse {Gaussian} process hyperparameters: optimize or integrate?},
	isbn = {978-1-7138-7108-8},
	shorttitle = {Sparse {Gaussian} process hyperparameters},
	urldate = {2025-10-27},
	booktitle = {Proceedings of the 36th {International} {Conference} on {Neural} {Information} {Processing} {Systems}},
	publisher = {Curran Associates Inc.},
	author = {Lalchand, Vidhi and Bruinsma, Wessel P. and Burt, David R. and Rasmussen, Carl E.},
	month = nov,
	year = {2022},
	pages = {16612--16623},
}

@book{stein_interpolation_1999,
	address = {New York, NY},
	series = {Springer {Series} in {Statistics}},
	title = {Interpolation of {Spatial} {Data}},
	copyright = {http://www.springer.com/tdm},
	isbn = {978-1-4612-7166-6 978-1-4612-1494-6},
	url = {http://link.springer.com/10.1007/978-1-4612-1494-6},
	doi = {10.1007/978-1-4612-1494-6},
	urldate = {2025-10-27},
	publisher = {Springer},
	author = {Stein, Michael L.},
	year = {1999},
}

@article{tripathy_gaussian_2016,
	title = {Gaussian processes with built-in dimensionality reduction: {Applications} to high-dimensional uncertainty propagation},
	volume = {321},
	issn = {0021-9991},
	shorttitle = {Gaussian processes with built-in dimensionality reduction},
	url = {https://www.sciencedirect.com/science/article/pii/S002199911630184X},
	doi = {10.1016/j.jcp.2016.05.039},
	urldate = {2025-10-27},
	journal = {Journal of Computational Physics},
	author = {Tripathy, Rohit and Bilionis, Ilias and Gonzalez, Marcial},
	month = sep,
	year = {2016},
	pages = {191--223},
}

@inproceedings{snelson_variable_2006,
	address = {Arlington, Virginia, USA},
	series = {{UAI}'06},
	title = {Variable noise and dimensionality reduction for sparse {Gaussian} processes},
	isbn = {978-0-9749039-2-7},
	urldate = {2025-10-27},
	booktitle = {Proceedings of the {Twenty}-{Second} {Conference} on {Uncertainty} in {Artificial} {Intelligence}},
	publisher = {AUAI Press},
	author = {Snelson, Edward and Ghahramani, Zoubin},
	month = jul,
	year = {2006},
	pages = {461--468},
}

@article{zhou_kernel_2020,
	title = {Kernel principal component analysis-based {Gaussian} process regression modelling for high-dimensional reliability analysis},
	volume = {241},
	issn = {0045-7949},
	url = {https://www.sciencedirect.com/science/article/pii/S0045794920301619},
	doi = {10.1016/j.compstruc.2020.106358},
	urldate = {2025-10-27},
	journal = {Computers \& Structures},
	author = {Zhou, Tong and Peng, Yongbo},
	month = dec,
	year = {2020},
	pages = {106358},
}

@misc{xu_standard_2025,
	title = {Standard {Gaussian} {Process} is {All} {You} {Need} for {High}-{Dimensional} {Bayesian} {Optimization}},
	url = {http://arxiv.org/abs/2402.02746},
	doi = {10.48550/arXiv.2402.02746},
	urldate = {2025-10-27},
	publisher = {arXiv},
	author = {Xu, Zhitong and Wang, Haitao and Phillips, Jeff M. and Zhe, Shandian},
	month = mar,
	year = {2025},
	note = {arXiv:2402.02746 [cs]},
}

@article{kennedy_bayesian_2001,
	title = {Bayesian {Calibration} of {Computer} {Models}},
	volume = {63},
	issn = {1369-7412},
	url = {https://doi.org/10.1111/1467-9868.00294},
	doi = {10.1111/1467-9868.00294},
	number = {3},
	urldate = {2025-10-27},
	journal = {Journal of the Royal Statistical Society Series B: Statistical Methodology},
	author = {Kennedy, Marc C. and O'Hagan, Anthony},
	month = sep,
	year = {2001},
	pages = {425--464},
}

@article{wang_recent_2022,
	title = {Recent {Advances} in {Surrogate} {Modeling} {Methods} for {Uncertainty} {Quantification} and {Propagation}},
	volume = {14},
	copyright = {http://creativecommons.org/licenses/by/3.0/},
	issn = {2073-8994},
	url = {https://www.mdpi.com/2073-8994/14/6/1219},
	doi = {10.3390/sym14061219},
	language = {en},
	number = {6},
	urldate = {2025-10-27},
	journal = {Symmetry},
	publisher = {Multidisciplinary Digital Publishing Institute},
	author = {Wang, Chong and Qiang, Xin and Xu, Menghui and Wu, Tao},
	month = jun,
	year = {2022},
	pages = {1219},
}

@article{sacks_design_1989,
	title = {Design and {Analysis} of {Computer} {Experiments}},
	volume = {4},
	issn = {0883-4237, 2168-8745},
	url = {https://projecteuclid.org/journals/statistical-science/volume-4/issue-4/Design-and-Analysis-of-Computer-Experiments/10.1214/ss/1177012413.full},
	doi = {10.1214/ss/1177012413},
	number = {4},
	urldate = {2025-10-27},
	journal = {Statistical Science},
	publisher = {Institute of Mathematical Statistics},
	author = {Sacks, Jerome and Welch, William J. and Mitchell, Toby J. and Wynn, Henry P.},
	month = nov,
	year = {1989},
	pages = {409--423},
}

@article{qian_model_2022,
	title = {Model {Reduction} of {Linear} {Dynamical} {Systems} via {Balancing} for {Bayesian} {Inference}},
	volume = {91},
	issn = {1573-7691},
	url = {https://doi.org/10.1007/s10915-022-01798-8},
	doi = {10.1007/s10915-022-01798-8},
	language = {en},
	number = {1},
	urldate = {2025-10-27},
	journal = {Journal of Scientific Computing},
	author = {Qian, Elizabeth and Tabeart, Jemima M. and Beattie, Christopher and Gugercin, Serkan and Jiang, Jiahua and Kramer, Peter R. and Narayan, Akil},
	month = mar,
	year = {2022},
	pages = {29},
}

@article{cui_datadriven_2015,
	title = {Data‐driven model reduction for the {Bayesian} solution of inverse problems},
	volume = {102},
	copyright = {http://onlinelibrary.wiley.com/termsAndConditions\#vor},
	issn = {0029-5981, 1097-0207},
	url = {https://onlinelibrary.wiley.com/doi/10.1002/nme.4748},
	doi = {10.1002/nme.4748},
	language = {en},
	number = {5},
	urldate = {2025-10-27},
	journal = {International Journal for Numerical Methods in Engineering},
	author = {Cui, Tiangang and Marzouk, Youssef M. and Willcox, Karen E.},
	month = may,
	year = {2015},
	pages = {966--990},
}

@article{schaden_multilevel_2020,
	title = {On {Multilevel} {Best} {Linear} {Unbiased} {Estimators}},
	volume = {8},
	url = {https://epubs.siam.org/doi/10.1137/19M1263534},
	doi = {10.1137/19M1263534},
	number = {2},
	urldate = {2025-10-21},
	journal = {SIAM/ASA Journal on Uncertainty Quantification},
	publisher = {Society for Industrial and Applied Mathematics},
	author = {Schaden, Daniel and Ullmann, Elisabeth},
	month = jan,
	year = {2020},
	pages = {601--635},
}

@article{gorodetsky_generalized_2020,
	title = {A generalized approximate control variate framework for multifidelity uncertainty quantification},
	volume = {408},
	issn = {0021-9991},
	url = {https://www.sciencedirect.com/science/article/pii/S0021999120300310},
	doi = {10.1016/j.jcp.2020.109257},
	urldate = {2025-10-17},
	journal = {Journal of Computational Physics},
	author = {Gorodetsky, Alex A. and Geraci, Gianluca and Eldred, Michael S. and Jakeman, John D.},
	month = may,
	year = {2020},
	pages = {109257},
}

@article{peherstorfer_survey_2018,
	title = {Survey of {Multifidelity} {Methods} in {Uncertainty} {Propagation}, {Inference}, and {Optimization}},
	volume = {60},
	issn = {0036-1445},
	url = {https://epubs.siam.org/doi/10.1137/16M1082469},
	doi = {10.1137/16M1082469},
	number = {3},
	urldate = {2025-10-17},
	journal = {SIAM Review},
	publisher = {Society for Industrial and Applied Mathematics},
	author = {Peherstorfer, Benjamin and Willcox, Karen and Gunzburger, Max},
	month = jan,
	year = {2018},
	pages = {550--591},
}

@inproceedings{bonilla_multi-task_2007,
	title = {Multi-task {Gaussian} {Process} {Prediction}},
	volume = {20},
	url = {https://papers.nips.cc/paper_files/paper/2007/hash/66368270ffd51418ec58bd793f2d9b1b-Abstract.html},
	urldate = {2025-10-15},
	booktitle = {Advances in {Neural} {Information} {Processing} {Systems}},
	publisher = {Curran Associates, Inc.},
	author = {Bonilla, Edwin V and Chai, Kian and Williams, Christopher},
	year = {2007},
}

@article{alvarez_kernels_2012,
	title = {Kernels for {Vector}-{Valued} {Functions}: {A} {Review}},
	volume = {4},
	issn = {1935-8237, 1935-8245},
	shorttitle = {Kernels for {Vector}-{Valued} {Functions}},
	url = {https://www.nowpublishers.com/article/Details/MAL-036},
	doi = {10.1561/2200000036},
	language = {English},
	number = {3},
	urldate = {2025-10-15},
	journal = {Foundations and Trends® in Machine Learning},
	publisher = {Now Publishers, Inc.},
	author = {Álvarez, Mauricio A. and Rosasco, Lorenzo and Lawrence, Neil D.},
	month = jun,
	year = {2012},
	pages = {195--266},
}

@article{brevault_overview_2020,
	title = {Overview of {Gaussian} process based multi-fidelity techniques with variable relationship between fidelities, application to aerospace systems},
	volume = {107},
	issn = {1270-9638},
	url = {https://www.sciencedirect.com/science/article/pii/S127096382031021X},
	doi = {10.1016/j.ast.2020.106339},
	urldate = {2025-10-15},
	journal = {Aerospace Science and Technology},
	author = {Brevault, Loïc and Balesdent, Mathieu and Hebbal, Ali},
	month = dec,
	year = {2020},
	pages = {106339},
}

@article{qian_multifidelity_2025,
	title = {Multifidelity linear regression for scientific machine learning from scarce data},
	volume = {7},
	copyright = {http://creativecommons.org/licenses/by/3.0/},
	url = {https://www.aimsciences.org/en/article/doi/10.3934/fods.2024049},
	doi = {10.3934/fods.2024049},
	language = {en},
	number = {1},
	urldate = {2025-10-15},
	journal = {Foundations of Data Science},
	publisher = {Foundations of Data Science},
	author = {Qian, Elizabeth and Kang, Dayoung and Sella, Vignesh and Chaudhuri, Anirban},
	month = mar,
	year = {2025},
	pages = {271--297},
}

@article{goulard_linear_1992,
	title = {Linear coregionalization model: {Tools} for estimation and choice of cross-variogram matrix},
	volume = {24},
	issn = {1573-8868},
	shorttitle = {Linear coregionalization model},
	url = {https://doi.org/10.1007/BF00893750},
	doi = {10.1007/BF00893750},
	language = {en},
	number = {3},
	urldate = {2025-09-09},
	journal = {Mathematical Geology},
	author = {Goulard, M. and Voltz, M.},
	month = apr,
	year = {1992},
	pages = {269--286},
}

@misc{pfortner_physics-informed_2024,
	title = {Physics-{Informed} {Gaussian} {Process} {Regression} {Generalizes} {Linear} {PDE} {Solvers}},
	url = {http://arxiv.org/abs/2212.12474},
	doi = {10.48550/arXiv.2212.12474},
	urldate = {2025-08-24},
	publisher = {arXiv},
	author = {Pförtner, Marvin and Steinwart, Ingo and Hennig, Philipp and Wenger, Jonathan},
	month = apr,
	year = {2024},
	note = {arXiv:2212.12474 [cs]},
}

@incollection{peterson_overview_2024,
	series = {{AIAA} {SciTech} {Forum}},
	title = {Overview of a {New} {Project} for {CFD} {Validation} of {Supersonic} {Mixing} and {Combustion}},
	url = {https://arc.aiaa.org/doi/10.2514/6.2024-1189},
	doi = {10.2514/6.2024-1189},
	urldate = {2025-08-13},
	booktitle = {{AIAA} {SCITECH} 2024 {Forum}},
	publisher = {American Institute of Aeronautics and Astronautics},
	author = {Peterson, David M. and Hammack, Stephen D. and Carter, Campbell D. and DeBardelaben, Cannon and Ochs, Bradley and Baurle, Robert A.},
	month = jan,
	year = {2024},
}

@article{koziel_multi-level_2013,
	series = {2013 {International} {Conference} on {Computational} {Science}},
	title = {Multi-level {CFD}-based {Airfoil} {Shape} {Optimization} {With} {Automated} {Low}-fidelity {Model} {Selection}},
	volume = {18},
	issn = {1877-0509},
	url = {https://www.sciencedirect.com/science/article/pii/S1877050913003979},
	doi = {10.1016/j.procs.2013.05.254},
	urldate = {2025-08-10},
	journal = {Procedia Computer Science},
	author = {Koziel, Slawomir and Leifsson, Leifur},
	month = jan,
	year = {2013},
	pages = {889--898},
}

@article{pitz_chapter_2020,
	title = {Chapter 14 {Molecular} {Tagging} {Velocimetry} in {Gases}},
	language = {en},
	author = {Pitz, Robert W and Danehy, Paul M and Steinberg, Edited Adam and Roy, Sukesh},
	month = jul,
	year = {2020},
}

@article{wang_experimental_2022,
	title = {Experimental and {Numerical} {Study} of the {Laminar} {Burning} {Velocity} and {Pollutant} {Emissions} of the {Mixture} {Gas} of {Methane} and {Carbon} {Dioxide}},
	volume = {19},
	issn = {1661-7827},
	url = {https://www.ncbi.nlm.nih.gov/pmc/articles/PMC8871781/},
	doi = {10.3390/ijerph19042078},
	number = {4},
	urldate = {2025-07-22},
	journal = {International Journal of Environmental Research and Public Health},
	author = {Wang, Yalin and Wang, Yu and Zhang, Xueqian and Zhou, Guoping and Yan, Beibei and Bastiaans, Rob J. M.},
	month = feb,
	year = {2022},
	pages = {2078},
}

@article{adusumilli_laminar_2021,
	title = {Laminar flame speed measurements of ethylene at high preheat temperatures and for diluted oxidizers},
	volume = {233},
	issn = {0010-2180},
	url = {https://www.sciencedirect.com/science/article/pii/S0010218021003072},
	doi = {10.1016/j.combustflame.2021.111564},
	urldate = {2025-07-22},
	journal = {Combustion and Flame},
	author = {Adusumilli, Sampath and Seitzman, Jerry},
	month = nov,
	year = {2021},
	pages = {111564},
}

@article{he_active_2024,
	title = {Active learning inspired multi-fidelity probabilistic modelling of geomaterial property},
	volume = {432},
	issn = {0045-7825},
	url = {https://www.sciencedirect.com/science/article/pii/S0045782524006285},
	doi = {10.1016/j.cma.2024.117373},
	urldate = {2025-07-09},
	journal = {Computer Methods in Applied Mechanics and Engineering},
	author = {He, Geng-Fu and Zhang, Pin and Yin, Zhen-Yu},
	month = dec,
	year = {2024},
	pages = {117373},
}

@article{matheron_principles_1963,
	title = {Principles of geostatistics},
	volume = {58},
	issn = {0361-0128},
	url = {https://doi.org/10.2113/gsecongeo.58.8.1246},
	doi = {10.2113/gsecongeo.58.8.1246},
	number = {8},
	urldate = {2025-06-23},
	journal = {Economic Geology},
	author = {Matheron, Georges},
	month = dec,
	year = {1963},
	pages = {1246--1266},
}

@article{krige_statistical_1951,
	title = {A statistical approach to some basic mine valuation problems on the {Witwatersrand}},
	volume = {52},
	url = {https://journals.co.za/doi/10.10520/AJA0038223X_4792},
	doi = {10.10520/AJA0038223X_4792},
	number = {6},
	urldate = {2025-06-23},
	journal = {Journal of the Southern African Institute of Mining and Metallurgy},
	publisher = {Southern African Institute of Mining and Metallurgy},
	author = {Krige, D. G.},
	month = dec,
	year = {1951},
	pages = {119--139},
}

@inproceedings{rahimi_random_2007,
	title = {Random {Features} for {Large}-{Scale} {Kernel} {Machines}},
	volume = {20},
	url = {https://papers.nips.cc/paper_files/paper/2007/hash/013a006f03dbc5392effeb8f18fda755-Abstract.html},
	urldate = {2025-06-23},
	booktitle = {Advances in {Neural} {Information} {Processing} {Systems}},
	publisher = {Curran Associates, Inc.},
	author = {Rahimi, Ali and Recht, Benjamin},
	year = {2007},
}

@inproceedings{titsias_variational_2009,
	title = {Variational {Learning} of {Inducing} {Variables} in {Sparse} {Gaussian} {Processes}},
	issn = {1938-7228},
	url = {https://proceedings.mlr.press/v5/titsias09a.html},
	language = {en},
	urldate = {2025-06-23},
	booktitle = {Proceedings of the {Twelfth} {International} {Conference} on {Artificial} {Intelligence} and {Statistics}},
	publisher = {PMLR},
	author = {Titsias, Michalis},
	month = apr,
	year = {2009},
	pages = {567--574},
}

@inproceedings{damianou_deep_2013,
	title = {Deep {Gaussian} {Processes}},
	issn = {1938-7228},
	url = {https://proceedings.mlr.press/v31/damianou13a.html},
	language = {en},
	urldate = {2025-06-20},
	booktitle = {Proceedings of the {Sixteenth} {International} {Conference} on {Artificial} {Intelligence} and {Statistics}},
	publisher = {PMLR},
	author = {Damianou, Andreas and Lawrence, Neil D.},
	month = apr,
	year = {2013},
	pages = {207--215},
}

@article{peherstorfer_optimal_2016,
	title = {Optimal {Model} {Management} for {Multifidelity} {Monte} {Carlo} {Estimation}},
	volume = {38},
	issn = {1064-8275, 1095-7197},
	url = {http://epubs.siam.org/doi/10.1137/15M1046472},
	doi = {10.1137/15M1046472},
	language = {en},
	number = {5},
	urldate = {2025-06-10},
	journal = {SIAM Journal on Scientific Computing},
	author = {Peherstorfer, Benjamin and Willcox, Karen and Gunzburger, Max},
	month = jan,
	year = {2016},
	pages = {A3163--A3194},
}

@article{gorodetsky_mfnets_2020,
	title = {{MFNets}: {MULTI}-{FIDELITY} {DATA}-{DRIVEN} {NETWORKS} {FOR} {BAYESIAN} {LEARNING} {AND} {PREDICTION}},
	volume = {10},
	issn = {2152-5080},
	shorttitle = {{MFNets}},
	url = {http://www.dl.begellhouse.com/journals/52034eb04b657aea,3673619972b2eee6,3606e2c805e529ee.html},
	doi = {10.1615/Int.J.UncertaintyQuantification.2020032978},
	language = {en},
	number = {6},
	urldate = {2025-06-07},
	journal = {International Journal for Uncertainty Quantification},
	author = {Gorodetsky, Alex A. and Jakeman, John D. and Geraci, Gianluca and Eldred, Michael S.},
	year = {2020},
	pages = {595--622},
}

@phdthesis{rasmussen_evaluation_1997,
	address = {CAN},
	type = {phd},
	title = {Evaluation of gaussian processes and other methods for non-linear regression},
	school = {University of Toronto},
	author = {Rasmussen, Carl Edward},
	year = {1997},
	note = {AAINQ28300
ISBN-10: 0612283003},
}

@inproceedings{wilson_kernel_2015,
	title = {Kernel {Interpolation} for {Scalable} {Structured} {Gaussian} {Processes} ({KISS}-{GP})},
	issn = {1938-7228},
	url = {https://proceedings.mlr.press/v37/wilson15.html},
	language = {en},
	urldate = {2025-04-30},
	booktitle = {Proceedings of the 32nd {International} {Conference} on {Machine} {Learning}},
	publisher = {PMLR},
	author = {Wilson, Andrew and Nickisch, Hannes},
	month = jun,
	year = {2015},
	pages = {1775--1784},
}

@misc{wilson_deep_2015,
	title = {Deep {Kernel} {Learning}},
	url = {http://arxiv.org/abs/1511.02222},
	doi = {10.48550/arXiv.1511.02222},
	urldate = {2025-04-16},
	publisher = {arXiv},
	author = {Wilson, Andrew Gordon and Hu, Zhiting and Salakhutdinov, Ruslan and Xing, Eric P.},
	month = nov,
	year = {2015},
	note = {arXiv:1511.02222 [cs]},
}

@article{fine_efficient_2001,
	title = {Efficient {SVM} {Training} {Using} {Low}-{Rank} {Kernel} {Representations}},
	volume = {2},
	issn = {ISSN 1533-7928},
	url = {https://www.jmlr.org/papers/v2/fine01a.html},
	number = {Dec},
	urldate = {2025-03-13},
	journal = {Journal of Machine Learning Research},
	author = {Fine, Shai and Scheinberg, Katya},
	year = {2001},
	pages = {243--264},
}

@inproceedings{williams_using_2000,
	title = {Using the {Nyström} {Method} to {Speed} {Up} {Kernel} {Machines}},
	volume = {13},
	url = {https://papers.nips.cc/paper_files/paper/2000/hash/19de10adbaa1b2ee13f77f679fa1483a-Abstract.html},
	urldate = {2025-03-13},
	booktitle = {Advances in {Neural} {Information} {Processing} {Systems}},
	publisher = {MIT Press},
	author = {Williams, Christopher and Seeger, Matthias},
	year = {2000},
}

@inproceedings{smola_sparse_2000,
	address = {San Francisco, CA, USA},
	series = {{ICML} '00},
	title = {Sparse {Greedy} {Matrix} {Approximation} for {Machine} {Learning}},
	isbn = {978-1-55860-707-1},
	urldate = {2025-03-13},
	booktitle = {Proceedings of the {Seventeenth} {International} {Conference} on {Machine} {Learning}},
	publisher = {Morgan Kaufmann Publishers Inc.},
	author = {Smola, Alex J. and Schökopf, Bernhard},
	month = jun,
	year = {2000},
	pages = {911--918},
}

@article{fernandez-godino_review_2023,
	title = {Review of multi-fidelity models},
	volume = {1},
	issn = {2837-1739},
	url = {http://arxiv.org/abs/1609.07196},
	doi = {10.3934/acse.2023015},
	number = {4},
	urldate = {2024-12-31},
	journal = {Advances in Computational Science and Engineering},
	author = {Fernández-Godino, M. Giselle},
	year = {2023},
	note = {arXiv:1609.07196 [stat]},
	pages = {351--400},
}

@misc{raissi_deep_2016,
	title = {Deep {Multi}-fidelity {Gaussian} {Processes}},
	url = {http://arxiv.org/abs/1604.07484},
	doi = {10.48550/arXiv.1604.07484},
	urldate = {2024-12-31},
	publisher = {arXiv},
	author = {Raissi, Maziar and Karniadakis, George},
	month = apr,
	year = {2016},
	note = {arXiv:1604.07484 [cs]},
}

@inproceedings{dai_scalable_2014,
	title = {Scalable {Kernel} {Methods} via {Doubly} {Stochastic} {Gradients}},
	volume = {27},
	url = {https://proceedings.neurips.cc/paper_files/paper/2014/hash/95d309f0b035d97f69902e7972c2b2e6-Abstract.html},
	urldate = {2024-12-20},
	booktitle = {Advances in {Neural} {Information} {Processing} {Systems}},
	publisher = {Curran Associates, Inc.},
	author = {Dai, Bo and Xie, Bo and He, Niao and Liang, Yingyu and Raj, Anant and Balcan, Maria-Florina F and Song, Le},
	year = {2014},
}

@article{cutajar_deep_2019,
	title = {Deep {Gaussian} {Processes} for {Multi}-fidelity {Modeling}},
	url = {https://www.semanticscholar.org/paper/Deep-Gaussian-Processes-for-Multi-fidelity-Modeling-Cutajar-Pullin/484ddd91f273edf4201e3d001d0b29a52fa27eac},
	urldate = {2024-12-08},
	journal = {ArXiv},
	author = {Cutajar, Kurt and Pullin, Mark and Damianou, Andreas C. and Lawrence, Neil D. and González, Javier I.},
	month = mar,
	year = {2019},
}

@article{kennedy_predicting_2000,
	title = {Predicting the {Output} from a {Complex} {Computer} {Code} {When} {Fast} {Approximations} {Are} {Available}},
	volume = {87},
	issn = {0006-3444},
	url = {https://www.jstor.org/stable/2673557},
	number = {1},
	urldate = {2024-11-23},
	journal = {Biometrika},
	publisher = {[Oxford University Press, Biometrika Trust]},
	author = {Kennedy, M. C. and O'Hagan, A.},
	year = {2000},
	pages = {1--13},
}

@article{gratiet_recursive_2014,
	title = {{RECURSIVE} {CO}-{KRIGING} {MODEL} {FOR} {DESIGN} {OF} {COMPUTER} {EXPERIMENTS} {WITH} {MULTIPLE} {LEVELS} {OF} {FIDELITY}},
	volume = {4},
	issn = {2152-5080, 2152-5099},
	url = {https://www.dl.begellhouse.com/journals/52034eb04b657aea,2f7b99cc281f2702,4c83626c5e64207a.html},
	doi = {10.1615/Int.J.UncertaintyQuantification.2014006914},
	language = {English},
	number = {5},
	urldate = {2024-11-18},
	journal = {International Journal for Uncertainty Quantification},
	publisher = {Begel House Inc.},
	author = {Gratiet, Loic Le and Garnier, Josselin},
	year = {2014},
}

@article{zhou_generalized_2020,
	title = {A generalized hierarchical co-{Kriging} model for multi-fidelity data fusion},
	volume = {62},
	issn = {1615-1488},
	url = {https://doi.org/10.1007/s00158-020-02583-7},
	doi = {10.1007/s00158-020-02583-7},
	language = {en},
	number = {4},
	urldate = {2024-11-18},
	journal = {Structural and Multidisciplinary Optimization},
	author = {Zhou, Qi and Wu, Yuda and Guo, Zhendong and Hu, Jiexiang and Jin, Peng},
	month = oct,
	year = {2020},
	pages = {1885--1904},
}

@article{zhou_sequential_2017,
	title = {A sequential multi-fidelity metamodeling approach for data regression},
	volume = {134},
	issn = {0950-7051},
	url = {https://www.sciencedirect.com/science/article/pii/S0950705117303556},
	doi = {10.1016/j.knosys.2017.07.033},
	urldate = {2024-11-18},
	journal = {Knowledge-Based Systems},
	author = {Zhou, Qi and Wang, Yan and Choi, Seung-Kyum and Jiang, Ping and Shao, Xinyu and Hu, Jiexiang},
	month = oct,
	year = {2017},
	pages = {199--212},
}

@article{tang_combined_2024,
	title = {A combined modeling method for complex multi-fidelity data fusion},
	volume = {5},
	issn = {2632-2153},
	url = {https://dx.doi.org/10.1088/2632-2153/ad718f},
	doi = {10.1088/2632-2153/ad718f},
	language = {en},
	number = {3},
	urldate = {2024-11-18},
	journal = {Machine Learning: Science and Technology},
	publisher = {IOP Publishing},
	author = {Tang, Lei and Liu, Feng and Wu, Anping and Li, Yubo and Jiang, Wanqiu and Wang, Qingfeng and Huang, Jun},
	month = sep,
	year = {2024},
	pages = {035071},
}

@article{zhang_multi-fidelity_2024,
	title = {{MULTI}-{FIDELITY} {MACHINE} {LEARNING} {FOR} {UNCERTAINTY} {QUANTIFICATION} {AND} {OPTIMIZATION}},
	volume = {5},
	issn = {2689-3967, 2689-3975},
	url = {https://www.dl.begellhouse.com/journals/558048804a15188a,6ea623526214d90d,07c5a8ef63c14b9f.html},
	doi = {10.1615/JMachLearnModelComput.2024055786},
	language = {English},
	number = {4},
	urldate = {2024-11-18},
	journal = {Journal of Machine Learning for Modeling and Computing},
	publisher = {Begel House Inc.},
	author = {Zhang, Ruda and Alemazkoor, Negin},
	year = {2024},
}

@article{yang_gaussian_2024,
	title = {Gaussian process fusion method for multi-fidelity data with heterogeneity distribution in aerospace vehicle flight dynamics},
	volume = {138},
	issn = {0952-1976},
	url = {https://www.sciencedirect.com/science/article/pii/S0952197624013861},
	doi = {10.1016/j.engappai.2024.109228},
	urldate = {2024-11-18},
	journal = {Engineering Applications of Artificial Intelligence},
	author = {Yang, Ben and Chen, Boyi and Liu, Yanbin and Chen, Jinbao},
	month = dec,
	year = {2024},
	pages = {109228},
}

@article{perdikaris_nonlinear_2017,
	title = {Nonlinear information fusion algorithms for data-efficient multi-fidelity modelling},
	volume = {473},
	url = {https://royalsocietypublishing.org/doi/full/10.1098/rspa.2016.0751},
	doi = {10.1098/rspa.2016.0751},
	number = {2198},
	urldate = {2024-11-18},
	journal = {Proceedings of the Royal Society A: Mathematical, Physical and Engineering Sciences},
	publisher = {Royal Society},
	author = {Perdikaris, P. and Raissi, M. and Damianou, A. and Lawrence, N. D. and Karniadakis, G. E.},
	month = feb,
	year = {2017},
	pages = {20160751},
}

@book{rasmussen_gaussian_2008,
	address = {Cambridge, Mass.},
	edition = {3. print},
	series = {Adaptive computation and machine learning},
	title = {Gaussian processes for machine learning},
	isbn = {978-0-262-18253-9},
	language = {en},
	publisher = {MIT Press},
	author = {Rasmussen, Carl Edward and Williams, Christopher K. I.},
	year = {2008},
}

@phdthesis{gratiet_multi-fidelity_2013,
	type = {phdthesis},
	title = {Multi-fidelity {Gaussian} process regression for computer experiments},
	url = {https://theses.hal.science/tel-00866770},
	language = {fr},
	urldate = {2024-10-28},
	school = {Université Paris-Diderot - Paris VII},
	author = {Gratiet, Loic Le},
	month = oct,
	year = {2013},
}

@article{xiao_extended_2018,
	title = {Extended {Co}-{Kriging} interpolation method based on multi-fidelity data},
	volume = {323},
	issn = {0096-3003},
	url = {https://www.sciencedirect.com/science/article/pii/S0096300317307646},
	doi = {10.1016/j.amc.2017.10.055},
	urldate = {2024-10-28},
	journal = {Applied Mathematics and Computation},
	author = {Xiao, Manyu and Zhang, Guohua and Breitkopf, Piotr and Villon, Pierre and Zhang, Weihong},
	month = apr,
	year = {2018},
	pages = {120--131},
}

@article{myers_matrix_1982,
	title = {Matrix formulation of co-kriging},
	volume = {14},
	issn = {1573-8868},
	url = {https://doi.org/10.1007/BF01032887},
	doi = {10.1007/BF01032887},
	language = {en},
	number = {3},
	urldate = {2024-09-10},
	journal = {Journal of the International Association for Mathematical Geology},
	author = {Myers, Donald E.},
	month = jun,
	year = {1982},
	pages = {249--257},
}

\end{document}